\documentclass[11pt]{article}
	
	\newcommand{\blind}{0}
	
    \makeatletter
    \renewcommand\section{\@startsection {section}{1}{\z@}%
                                       {-3.5ex \@plus -1ex \@minus -.2ex}%
                                       {2.3ex \@plus.2ex}%
                                       {\normalfont\fontfamily{phv}\fontsize{16}{19}\bfseries}}
    \renewcommand\subsection{\@startsection{subsection}{2}{\z@}%
                                         {-3.25ex\@plus -1ex \@minus -.2ex}%
                                         {1.5ex \@plus .2ex}%
                                         {\normalfont\fontfamily{phv}\fontsize{14}{17}\bfseries}}
    \renewcommand\subsubsection{\@startsection{subsubsection}{3}{\z@}%
                                        {-3.25ex\@plus -1ex \@minus -.2ex}%
                                         {1.5ex \@plus .2ex}%
                                         {\normalfont\normalsize\fontfamily{phv}\fontsize{14}{17}\selectfont}}
    \makeatother
	
	\usepackage{amsmath}
    \usepackage{amssymb}
    \usepackage{amsfonts}
    \usepackage{amsthm}
    \usepackage{multirow}
    \usepackage{booktabs}
    \usepackage{algorithm}
    \usepackage{algorithmic}
	\usepackage{graphicx}
	\usepackage{enumerate}
    \usepackage{subcaption}
	\usepackage{xcolor}
	\usepackage{natbib} 
	\usepackage{url} 

    \theoremstyle{plain}
    \newtheorem{proposition}{Proposition}
	
\begin{document}
		
		\def\spacingset#1{\renewcommand{\baselinestretch}%
			{#1}\small\normalsize} \spacingset{1}
		
		\if0\blind
		{
			\title{A Predict-then-Correct Loop Based on Few-Shot Continuous Contextual Bandit for Demand Forecasting}
			\author{Zhiwei Lei$^a$, Benedict Jun Ma$^a$ and Ilya Jackson$^b$ \\
			$^a$Thrust of Intelligent Transportation, \\ Hong Kong University of Science and Technology (Guangzhou), China; \\ $^b$Center for Transportation and Logistics, \\ Massachusetts Institute of Technology, USA; \\}
			\date{}
			\maketitle
		} \fi
		
		\if1\blind
		{
            \title{A Predict-then-Correct Loop Based on Few-Shot Continuous Contextual Bandit for Demand Forecasting}
			\author{Author information is purposely removed for double-blind review}
			
\bigskip
			\bigskip
			\bigskip
			\begin{center}
				{\textbf{\Large A Predict-then-Correct Loop Based on Few-Shot Continuous Contextual Bandit for Demand Forecasting}}
			\end{center}
			\medskip
		} \fi
		\bigskip
		
	\begin{abstract}
Retail demand forecasting remains difficult when demand shifts faster than static forecasting models can be retrained, especially in early demand cycles where newly observed labels are sparse. To address this, this study aims to improve adaptive retail forecasting by proposing a predict-then-correct (PtC) framework that retains a first-stage machine learning (ML) forecast and applies a few-shot continuous contextual bandit correction policy with similar-SKUs augmentation and top-$p$ masked updating. Across Walmart retail data and an exclusive beverage dataset, PtC delivers statistically significant reductions in MAPE, MAE, and RMSE across stable \& high volume, stable \& low volume, and erratic \& intermittent demand patterns, improves average RMSE by 9.52\% over the ML-only baseline in the ablation study, and yields lower inventory costs than base-stock, proximal policy optimization, and soft actor-critic policies under the tested lead-time settings. These findings show that online forecast correction can bridge offline demand learning and real-time retail decision-making by adapting to sparse feedback without fully retraining the base forecasting model.
	\end{abstract}
			
	\noindent%
	{\it Keywords:} demand forecasting; predict-then-correct; contextual bandit; few-shot learning.

	\spacingset{1.5} 

\section{Introduction} 
\label{s:intro}
Retail supply chains are high-dimensional, uncertain, and complex decision systems that tightly couple demand forecasting, inventory control, and multi-stage logistics operations \citep{cohen2022demand}. This system-level complexity is driven primarily by three interrelated dimensions: stochastic demand, network structure, and dynamic decision-making. First, customer demand is non-stationary and volatile, often affected by exogenous market conditions, pricing mechanisms, and promotional activities \citep{fahimnia2025service}. Second, managing millions of stock-keeping units (SKUs) across multi-echelon distribution networks introduces intricate product-level relationships, such as substitution effects, complementarity, and cross-item resource sharing \citep{ma2016demand}. Third, key operational decisions, including inventory allocation \citep{khouja2019early,kim2024multiagent}, dynamic pricing \citep{zhong2020gametheoretic}, and robust procurement \citep{wagner2015robust,li2024supply}, exhibit complex time-dependent characteristics \citep{huang2019forecasting}. Consequently, inaccurate demand forecasts can propagate beyond the forecasting stage and distort downstream operational decisions, leading to severe stockouts or costly backlogs that degrade overall supply chain performance \citep{cohen2022demand}.

With the rapid development of artificial intelligence (AI) and enterprise big data infrastructures, modern supply chains are actively evolving into data-driven cyber-physical systems that achieve real-time perception and dynamic closed-loop optimization \citep{punia2022predictive, mandania2023dynamic}. In this context, demand forecasting constitutes a fundamental analytical function for such intelligent supply chain systems. Industry evidence suggests that AI-driven forecasting frameworks can reduce prediction errors by 20\% to 50\%, lower holding costs by 5\% to 10\%, and reduce lost sales by up to 65\% \citep{mckinsey2022ai}. Methodologically, this development has accelerated a shift in forecasting research from classical parametric time-series models toward supervised machine learning (ML) architectures, such as long short-term memory (LSTM) networks \citep{hochreiter1997long}, gradient boosting machines (GBM) \citep{friedman2001greedy}, XGBoost \citep{chen2016xgboost}, and deep learning (DL) architectures such as transformer-based variants \citep{zhang2024intermittent, li2024enhancing}. Due to the abilities of nonlinear mapping, these data-driven approaches have demonstrated decisive algorithmic superiority over traditional statistical benchmarks in large-scale forecasting \citep{makridakis2022m5, lei2024pooling}.

Despite these advances, offline-trained ML models remain limited when deployed in rapidly changing and non-stationary market environments. During a new demand cycle or a product launch period, market dynamics often suffer from sharp structural breaks compared to historical regimes, inducing acute data sparsity and prominent ``cold-start" challenges \citep{petropoulos2022forecasting}. Because traditional supervised pipelines rely on rigid, batch-oriented training and require massive data pools for expensive offline retraining, they cannot seamlessly ingest real-time demand feedback. In practice, the correction of forecast bias therefore remains largely reactive \citep{wang2024reinforcement}. Once a base model is trained and deployed, it operates as an open-loop predictor that lacks the capacity for real-time calibration against short-term operational fluctuations and concurrent temporal shifts. This reveals a critical methodological void: the absence of a systematic, sample-efficient mechanism engineered to explicitly \textit{learn to correct} baseline prediction errors on the fly.

To bridge this gap, this study proposes a novel \textbf{predict-then-correct} (PtC) loop framework designed for adaptive demand calibration. Instead of treating forecast-error correction as a passive label-fitting task, we reformulate real-time forecast adjustment as a contextual bandit (CB) problem. In the proposed framework, an offline ML model first generates baseline predictions, after which an online few-shot continuous contextual bandit (FSCCB) policy dynamically fine-tunes these predictions by selecting bounded, continuous corrective actions from streaming market contexts based on immediate reward feedback. The core advantage of this architecture lies in its decoupled, closed-loop design: it preserves the structural, historical knowledge captured by the offline base model, while enabling the system with real-time, step-by-step responsiveness as market observations gradually unfold. To resolve data scarcity and the stability-plasticity dilemma inherent in early-cycle launches, the framework further integrates a feature-similar sequence data augmentation strategy and a sparse parameter update mechanism \citep{mazumder2021few} that updates only highly sensitive policy parameters while freezing the primary structural network. Ultimately, this framework shifts the operational forecasting paradigm from static prediction to autonomous, self-calibrating intelligent control.

The main contributions of this paper are summarized as follows:

(1) We propose a two-stage PtC framework that couples AI/ML forecasting with a CB correction model. This transforms static, open-loop demand forecasting into an adaptive, closed-loop system, significantly enhancing prediction responsiveness to market fluctuations.

(2) To tackle real-time data scarcity, we develop a few-shot continuous updating strategy. A base model is pretrained on historical data for initialization, and then dynamically fine-tuned via streaming data augmentation and a selective parameter update mechanism, enabling rapid adaptation to new demand patterns while preventing overfitting and catastrophic forgetting.

The remainder of the paper is organized as follows. Related works are reviewed in the next section. Section \ref{sec:third} describes the details of the two-stage framework. The experimental results are discussed in Section \ref{sec:experiments}. Section \ref{sec:inventory_ptc} discusses the application of PtC in the inventory management system. Conclusions are presented in Section \ref{sec:conclusion}.

\section{Literature review} 
\label{s:review}
\subsection{Traditional statistical methods for demand forecasting}

Classical demand forecasting research first developed around extrapolative time-series models. Naive, moving-average, exponential-smoothing, and ARIMA-family models remain widely used because they are transparent, computationally efficient, and effective when demand contains stable level, trend, or seasonal components \citep{taylor2003exponential,nikolopoulos2011adida,hyndman2018forecasting}. Exponential-smoothing models place greater weight on recent observations and therefore provide a simple way to adapt to gradual level or trend changes, while ARIMA-type models use autoregressive and moving-average structures to capture serial dependence after differencing. These models are especially attractive in operational environments because they are easy to implement and require limited feature engineering. Retail applications extended these ideas to volatile and skewed demand settings: early work on retailer demand forecasting identified promotions, competitor actions, weather, and holidays as important drivers \citep{geurts1986forecasting}, while later studies developed interval forecasts, hybrid SARIMAX quantitative methods, and intermittent-demand aggregation strategies for demand series with irregular arrivals or heavy-tailed errors \citep{taylor2007forecasting,arunraj2015hybrid,nikolopoulos2011adida}. The strength of this stream is its clarity: the source of each forecast is usually traceable to level, seasonality, trend, or a small set of lagged terms. Its limitation is equally clear. Once the functional form has been specified, the model has limited flexibility to represent nonlinear promotion effects, changing consumer response, cross-SKU substitution, or sudden shifts in demand regimes. Thus, classical statistical models provide reliable baselines, but they are less suited to settings where the main forecasting challenge is rapid adaptation rather than stable extrapolation.

A second stream uses causal, multivariate, and econometric models to incorporate explanatory factors into demand forecasts. In retail systems, demand is rarely driven by time alone. All prices, discounts, display activities, holidays, competitive information, search intensity, weather, social-media signals, and product attributes can alter realized sales. Therefore, operations and forecasting studies have examined how external information can be integrated into forecast pipelines and how judgmental or promotional adjustments affect forecast quality \citep{fildes2008forecasting,trapero2013analysis,steinker2017weather,boone2018google,cui2018operational,fildes2022retail}. In fast-moving consumer goods and retail settings at the SKU-level, variable selection and structured econometric models have been used to address high-dimensional promotional variables, competitive effects, and time-varying marketing impacts \citep{ma2016demand,huang2014value,huang2019forecasting,ye2024forecasting}. Hierarchical Bayesian and finite-mixture extensions further examine whether store-level heterogeneity improves forecast and elasticity estimation \citep{andrews2008estimating}. These studies make an important contribution by linking prediction to interpretable business mechanisms: instead of treating demand as an autonomous time series, they show how observed operational levers can improve forecast accuracy and managerial understanding. However, this gain in explanatory richness creates a practical tradeoff. Causal and multivariate models require well-measured covariates, stable relationships between covariates and demand, and enough observations to estimate the effects reliably. When many promotional, price, calendar, and competitive variables are included, the model can face sparse event coverage and a heavy variable-selection burden \citep{trapero2015identification}. These difficulties are particularly acute for new products, short sales windows, and low-volume SKUs, where the data needed to estimate a rich explanatory model may not yet exist.

Large-scale empirical evidence also warns that model complexity does not automatically translate into better forecasts. The M-competitions showed that statistically sophisticated methods may fail to outperform simpler alternatives on broad forecasting benchmarks \citep{makridakis1982accuracy,makridakis1993m2,makridakis2000m3}, and related evidence argues that excessive model complexity can reduce accuracy, increase error opportunities, and weaken managerial usability \citep{green2015simple}. This finding is important for retail supply chains because forecasting is not an isolated statistical exercise. Forecasts are inputs to replenishment, allocation, pricing, and inventory-control decisions, so a method that is marginally more accurate but difficult to maintain may still be unattractive in practice. Studies in fast-fashion inventory and online-retail analytics illustrate that demand forecasts create value only when they are timely and operationally usable \citep{caro2010inventory,ferreira2016analytics}. A forecast produced after a long retraining cycle may be too late for ordering decisions; a model that requires extensive manual feature redesign may be too costly for thousands of SKUs; and a method that performs well on average may still fail when early-cycle observations are sparse. 

\subsection{Machine learning and deep learning models}

ML and DL methods have expanded the forecasting toolkit by allowing nonlinear mappings, high-dimensional covariates, and large-scale pattern learning. Review studies report a broad adoption of ML tools in supply-chain demand forecasting, supplier selection, order allocation, and related operational prediction tasks \citep{aamer2021data,ingle2021demand,islam2021machine,feizabadi2022machine,malviya2024systematic}. In retail demand prediction, richer information sources and supervised-learning methods have enabled models to use weather, promotion, search, social-media, product-attributes and price information on a scale that is difficult for traditional statistical models \citep{steinker2017weather,boone2018google,cui2018operational,lei2023new,ferreira2016analytics,alley2023pricing}. The 2020 M5 Accuracy competition further illustrates this shift: leading methods were ML-based and achieved strong accuracy improvements over traditional statistical benchmarks on large-scale hierarchical Walmart sales data \citep{makridakis2022m5}. This line of work shows that data-intensive models can represent complex retail patterns, pool signals across products, and exploit high-dimensional covariates. It also changes the role of forecasting in supply chains. Forecasting is no longer only an extrapolation task based on a single SKU history; it becomes a pattern-learning problem that uses cross-product, temporal, promotional, and contextual signals to infer future demand.

Within this broad family, different model classes address different sources of complexity. Tree-based ensemble models, such as gradient boosting frameworks, are effective when forecasting is based on tabular covariates, nonlinear interactions, and heterogeneous feature effects. Neural networks and deep sequence models capture nonlinear relationships and temporal dependencies. Early evidence of ANN, CNN-based time-series models, LSTM architectures, attention mechanisms, and Transformer variants have been used to improve retail or time-series forecasting \citep{adya1998effective,alon2001forecasting,aburto2007improved,zhang2019short,graves2005framewise,mamdouh2024improving,gao2023adversarial,zhang2024intermittent,li2024enhancing}. These models are attractive because they can learn representations rather than requiring researchers to specify all interactions manually. In the retail setting, this is especially useful when demand is jointly affected by price, promotion, calendar, assortment, and historical sales signals. At the same time, deep models can be data-hungry, sensitive to distribution shifts, and difficult to interpret. Their empirical success often depends on the availability of large training histories and on whether future conditions resemble the historical regimes used for training. When demand changes after deployment, simply relying on the offline model may be insufficient, even if the model was strong on the original test set.

Other ML streams complement deep sequence forecasting by addressing uncertainty, dependency, and robustness. Gaussian process and neural network applications in agricultural, commodity, real-estate, and financial price series demonstrate the value of flexible nonlinear learning and uncertainty-aware prediction in related time-series environments \citep{xu2021corn,xu2023steel,jin2025csi300}. Graph and causal structure methods address interconnected markets and contemporaneous dependencies, which are relevant to retail systems where products may substitute, complement, or share demand drivers \citep{jin2025jiangsu}. Ensemble and composite forecasts improve robustness by combining multiple base learners and reducing dependence on a single model specification \citep{guo2025bayesian}. These approaches broaden the modeling landscape, but they also highlight a recurring limitation. However, the literature has developed many powerful first-stage predictors, but the second-stage question of real-time forecast correction remains less developed.

The need for adaptation has motivated online learning, reinforcement learning, and continual-learning approaches in operations and forecasting. Online-learning studies emphasize updating models as new data arrive, while reinforcement-learning studies show how operational policies can learn from repeated interaction with uncertain environments \citep{petropoulos2022forecasting,wang2024reinforcement,boute2022deep,gijsbrechts2022can}. These studies are important because they move beyond purely static prediction and recognize that decisions and observations unfold sequentially. However, a direct reinforcement-learning formulation may be excessive for demand correction when the immediate goal is not to learn an entire inventory policy but to adjust a baseline forecast before downstream decisions are made. Similarly, online retraining can be costly and unstable when only a few new observations are available. Continual-learning and few-shot studies further show that unrestricted parameter updates can cause overfitting or catastrophic forgetting under limited new data \citep{kirkpatrick2017overcoming,tian2024survey,bethune2025scaling,mazumder2021few}. This creates a stability-plasticity dilemma: a model must adapt enough to correct new bias, but not so much that it destroys the useful structure learned from previous demand cycles.

The closest unresolved issue is therefore not whether advanced forecasting models can be accurate offline, but how their forecasts should be adapted online after new demand feedback is observed. Most demand forecasting pipelines still generate static predictions, select among static models, or require costly retraining when demand changes. They rarely treat each forecast as a decision that can be immediately corrected using contextual feedback. This distinction is central to our study. The PtC framework does not attempt to replace statistical, machine-learning, or deep-learning forecasters. Instead, it uses them as the first stage and adds a CB-based correction policy as the second stage. The CB formulation is appropriate because each correction action receives immediate feedback through the change in forecast error, which makes the learning signal local, interpretable, and sample efficient. The few-shot continuous updating design further addresses the limited-data setting by updating only selected parameters through the Top-$p$ mechanism, thus preserving prior correction knowledge while allowing local adaptation.

\section{Solution methods}
\label{sec:third}
\subsection{AI/ML prediction and data augmentation}
\label{sec:PDA}
Figure \ref{fig:frw} illustrates the overall framework of the proposed PtC method. In the first stage, using average demand, coefficient of variation (CoV), and intermittency as clustering features, SKUs will be divided into four categories: stable \& high volume, stable \& low volume, erratic \& intermittent, and lumpy \citep{ma2025data}. Each category is then associated with an ML algorithm for prediction. For the target SKU, the first step is to identify its category and then generate a baseline forecast using the corresponding ML model. The resulting baseline forecast is used as input to the second stage. Augmentation data selection within each cluster is provided in Appendix \ref{app:augmentation_selection}.

\begin{figure}[htbp]
    \centering
    \includegraphics[width=0.8\textwidth]{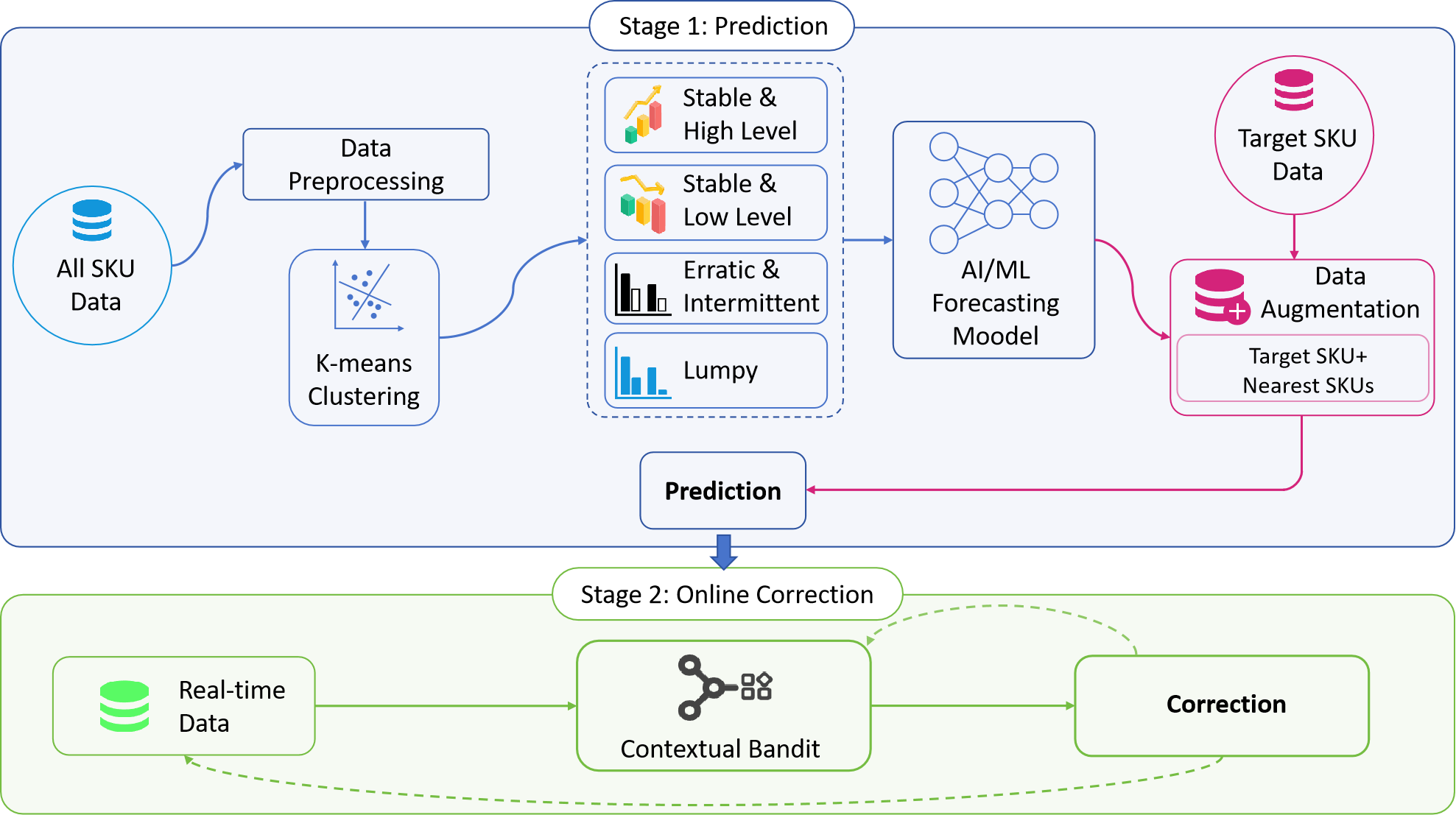}
    \caption{Total framework of the proposed method.}
    \label{fig:frw}
\end{figure}

In the second stage, we adopt the CB as the correction model to adjust the ML-predicted demand by real-time observations within a sliding window. To support timely prediction and avoid delaying operational decisions, only five or seven days of demand information can be collected; otherwise, the correction loses its practical relevance. In this setting, real-time demand information from a single SKU is insufficient to train an effective correction policy and may cause severe overfitting or catastrophic forgetting. Therefore, it is necessary to design a data augmentation strategy to expand the training set and alleviate these risks. 

For the target SKU, there are often numerous SKUs within the same cluster that share similar characteristics. Thus, we first rank candidate SKUs by their Euclidean distance to the target SKU and then select the $k$ nearest SKUs to augment the training set. Meanwhile, an upper distance threshold $d_{\mathrm{is}}$ is set to avoid selecting SKUs that are too dissimilar from the target SKU. If fewer than $k$ SKUs satisfy this threshold, we apply same-trend perturbations to both historical demand and ML-predicted demand until a pre-specified minimum training-set size is reached. Clustering, scaling, neighbor selection, and augmentation parameters are estimated using training-period information only to prevent test-period leakage.

\subsection{FSCCB}
\subsubsection{State, action, and reward design}
With full observation enabled, the observed state, or context, of the PtC system at time step $t$ is denoted as:
\begin{equation}
    \label{eq:state}
    s_t =
    \left\{
    g^d_r, g^d_x, g^d_e, g^d_{hr}, g^d_{hx}, g^d_s,
    g^t_d, g^t_r, g^t_w, g^t_h, g^t_s
    \right\}.
\end{equation}

The description of each component is shown in Table \ref{tab:state_representation}. Specifically, $g^d_r$ represents the historical real demand, $g^d_x$ represents the historical ML-predicted demand, and $g^d_e$ represents the historical prediction error. In addition, $g^d_{hr}$ and $g^d_{hx}$ denote the mean historical real demand and the mean historical ML-predicted demand, respectively, while $g^d_s$ measures the historical slope of the real demand. Time-related features include the current time step, the remaining time before the end of the cycle, and binary indicators for weekend, holiday and shopping festival effects.

\begin{table}[htbp]
\centering
\caption{State representation components.}
\begin{tabular}{lll}
\hline
\textbf{Type} & \textbf{Description} & \textbf{Notation} \\
\hline
                 & Historical real demand & ${g}_{r}^d$ \\
                 & Historical ML-predicted demand & ${g}_{x}^d$ \\
Historical data  & Historical ML error & ${g}_{e}^d$ \\
                 & Mean historical real demand & ${g}_{hr}^d$ \\
                 & Mean historical ML-predicted demand & ${g}_{hx}^d$ \\
                 & Historical slope of real demand & ${g}_{s}^d$ \\
\hline
                      & Current time step & ${g}_{d}^t$ \\
                      & Remaining time before the end of the cycle & ${g}_{r}^t$ \\
Time characteristics  & Is weekend or not & ${g}_{w}^t$ \\
                      & Is holiday or not & ${g}_{h}^t$ \\
                      & Is shopping festival or not & ${g}_{s}^t$ \\
\hline
\end{tabular}
\label{tab:state_representation}
\end{table}

In the CB framework, $s_t$ is used as the contextual information available before making the correction decision. The policy network uses this context to determine how the original ML prediction should be adjusted.

In this study, the action is defined as the continuous adjustment factor applied to the
ML-predicted demand. Given the observed context $s_t$, the policy network outputs the
mean and standard deviation of a Gaussian correction distribution, denoted by
$\mu_{\theta}(s_t)$ and $\sigma_{\theta}(s_t)$, respectively. During training and online
adaptation, the correction action is sampled as
\begin{equation}
    \label{eq:stochastic_action}
    \tilde{a}_t
    \sim
    \mathcal{N}
    \left(
    \mu_{\theta}(s_t),
    \sigma_{\theta}^{2}(s_t)
    \right),
    \qquad
    a_t
    =
    \operatorname{clip}
    \left(
    \tilde{a}_t,-1,2
    \right).
\end{equation}

Here, $a_t$ denotes the sampled relative correction ratio. A negative value indicates that the original ML prediction should be reduced, while a positive value indicates that the original ML prediction should be increased. The action range is limited to $[-1,2]$ to prevent the correction model from generating unreasonable adjustment values. The stochastic sampling in Eq. (\ref{eq:stochastic_action}) provides local exploration over possible upward and downward forecast adjustments.

Given the original ML-predicted demand $\hat{y}^{ML}_t$, the corrected demand is calculated as:
\begin{equation}
    \label{eq:inverse}
    \hat{y}^{CB}_t
    =
    \hat{y}^{ML}_t(1+a_t^*),
\end{equation}
where $\hat{y}^{CB}_t$ denotes the demand value corrected by the CB model.

Therefore, the CB policy does not directly predict the final demand. Instead, it learns how to adjust the baseline ML prediction according to the observed context.

The reward is defined as the relative degree of improvement after correction. At each time step $t$, the prediction error of the original ML model is calculated as:
\begin{equation}
    \label{eq:ML_err}
    e_{ML,t}
    =
    \left|
    \hat{y}^{ML}_t-y_t
    \right|,
\end{equation}
where $y_t$ denotes the actual demand.

Similarly, the prediction error after the CB correction is calculated as:
\begin{equation}
    \label{eq:Corr_err}
    e_{CB,t}
    =
    \left|
    \hat{y}^{CB}_t-y_t
    \right|.
\end{equation}

To evaluate whether the correction action improves the original ML prediction, the reward function is designed as:
\begin{equation}
    \label{eq:reward}
    r_t
    =
    \tau
    \left(
    e_{ML,t}-e_{CB,t}
    \right),
\end{equation}
where $\tau$ is a scaling factor used to normalize the reward into a suitable numerical range.

According to Eq. (\ref{eq:reward}), a positive reward indicates that the CB correction reduces the prediction error compared with the original ML prediction. A negative reward means that the correction worsens the prediction result. Therefore, maximizing the reward is equivalent to learning a correction policy that improves the forecasting accuracy of the baseline ML model.

\subsubsection{CB for forecast correction}

In this study, the forecast correction problem is formulated as a continuous contextual bandit problem. The proposed PtC framework first uses the baseline ML model to generate an initial demand prediction. Then, the CB model samples a continuous correction action according to the stochastic policy $\pi_{\theta}(\cdot \mid s_t)$, the main steps are provided in Figure \ref{fig:bandit}.

The correction process can be represented as:
$s_t \rightarrow a_t^* \rightarrow r_t,$
where $s_t$ denotes the observed context, $a_t^*$ denotes the correction action, and $r_t$ denotes the reward measuring the improvement after correction. At each time step $t$, the correction model first observes the context $s_t$, which is constructed from recent demand, baseline ML forecasts, historical forecast errors, and time-related features. The policy then samples a continuous correction action $a_t$ from $\pi_{\theta}(\cdot \mid s_t)$ and applies it to the baseline forecast. This stochastic action-selection mechanism enables exploration around the current policy mean and allows the model to test whether increasing, decreasing, or retaining the baseline forecast is more beneficial under the current context.

\begin{figure}[htbp]
    \centering
    \includegraphics[width=\textwidth]{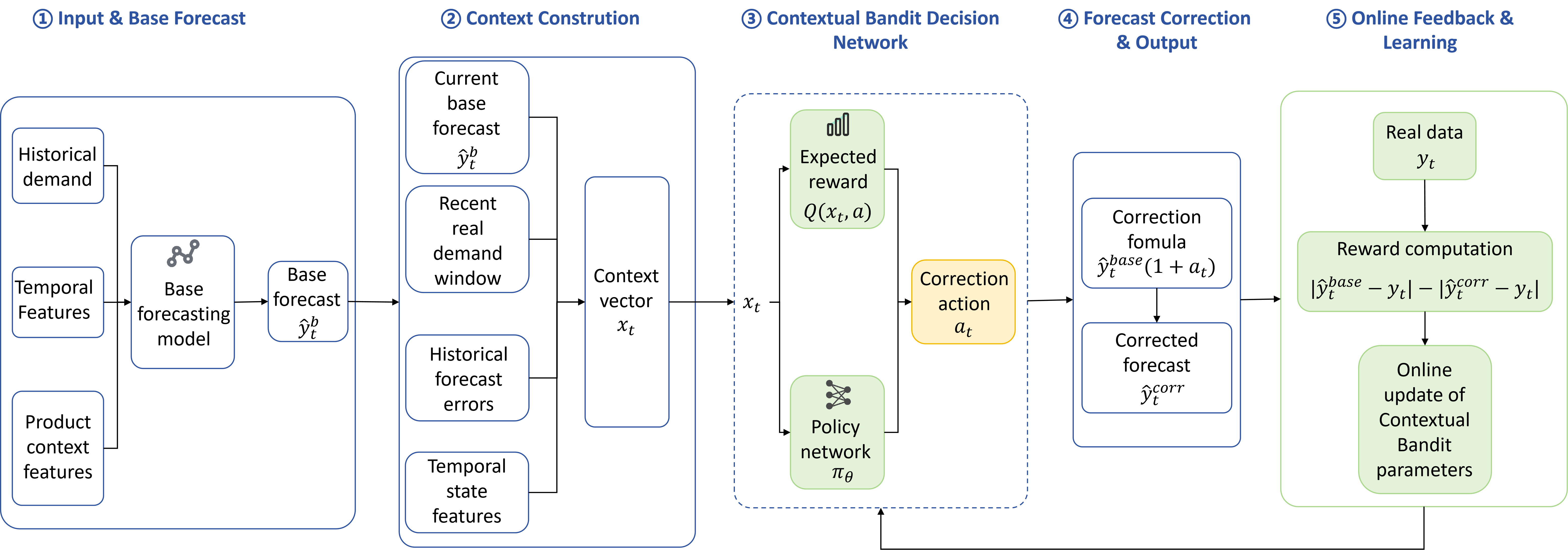}
    \caption{Main steps of the second stage in PtC.}
    \label{fig:bandit}
\end{figure}

The bandit feedback structure is essential in this formulation. Before the actual demand $y_t$ is observed, the model only has access to $s_t$ and must commit to one correction action $a_t$. After $y_t$ is realized, the reward $r_t$ is computed by comparing the baseline forecast error and the corrected forecast error. The model uses the reward of the selected action to update the policy, while no supervised target action is imposed. The newly observed tuple: $(s_t,a_t,r_t,\log \pi_{\theta}(a_t\mid s_t))$ is then added to the sliding-window feedback set and used for future policy updates.
Therefore, PtC learns from bandit feedback by increasing the probability of correction actions that receive positive rewards and decreasing the probability of actions that worsen the baseline forecast.

\subsubsection{Learning Objective of the CB Correction Model}

Let the chronological feedback set be:
\begin{equation}
    \mathcal{D}
    =
    \left\{
    (s_t,a_t^*,r_t)
    \right\}_{t=1}^{T},
\end{equation}
where each tuple contains the context, the correction action, and the realized reward. The goal is to learn a policy $\pi_{\theta}$ that maps the observed context to a continuous correction ratio:

\begin{equation}
    \label{eq:policy_mapping}
    a_t^*
    =
    \pi_{\theta}(s_t).
\end{equation}

Given the first-stage forecast $\hat{y}^{ML}_t$, the corrected forecast induced by this policy is:
\begin{equation}
    \label{eq:cb_corrected_demand}
    \hat{y}^{CB}_t
    =
    \hat{y}^{ML}_t
    \left(1+\pi_{\theta}(s_t)\right).
\end{equation}
Thus, the policy network does not replace the first-stage forecasting model. It only learns a context-dependent adjustment factor for the existing forecast.

At the population level, the CB correction policy is trained to maximize the expected reward obtained after correction:
\begin{equation}
    \label{eq:cb_objective}
    \pi_{\theta}^{*}
    =
    \arg\max_{\pi_{\theta}}
    \mathbb{E}_{s_t \sim \mathcal{D}}
    \left[
    r_t(s_t,\pi_{\theta}(s_t))
    \right].
\end{equation}
This objective is appropriate for forecast correction because each action is evaluated by its immediate effect on the current prediction error. Unlike a full reinforcement-learning formulation, no long-horizon value function is required; the correction model only needs to learn whether the current context calls for increasing, decreasing or retaining the baseline forecast.

In empirical training, the expected objective is approximated by the average reward over the feedback set:
\begin{equation}
    \label{eq:cb_train_objective}
    \theta^*
    =
    \arg\max_{\theta}
    \frac{1}{T}
    \sum_{t=1}^{T}
    r_t.
\end{equation}

Substituting the reward definition from Eq. (\ref{eq:reward}) into Eq. (\ref{eq:cb_train_objective}) gives:

\begin{equation}
    \label{eq:cb_final_objective}
    \theta^*
    =
    \arg\max_{\theta}
    \frac{1}{T}
    \sum_{t=1}^{T}
    \tau
    \left(
    \left|\hat{y}^{ML}_t-y_t\right|
    -
    \left|
    \hat{y}^{ML}_t
    \left(1+\pi_{\theta}(s_t)\right)
    -
    y_t
    \right|
    \right).
\end{equation}

The first error term in Eq. (\ref{eq:cb_final_objective}) is determined by the baseline ML forecast and does not depend on $\theta$. Therefore, maximizing the reward is equivalent to minimizing the corrected prediction error:
\begin{equation}
    \label{eq:cb_loss}
    \mathcal{L}(\theta)
    =
    \frac{1}{T}
    \sum_{t=1}^{T}
    \left|
    \hat{y}^{ML}_t
    \left(1+\pi_{\theta}(s_t)\right)
    -
    y_t
    \right|.
\end{equation}
This equivalence connects the reward-based contextual bandit formulation with the standard forecasting objective: a high reward means that the correction policy reduces the absolute error of the first-stage forecast.

\begin{proposition}
\label{prop:zero-correction}
Let $L_{PtC}^{*}$ and $L_{ML}$ denote the optimal corrected and ML-only expected losses induced by any nonnegative forecasting loss $\ell$. If $0\in\mathcal{A}$ and $\pi_0(s)=0\in\Pi$, then $L_{PtC}^{*}\le L_{ML}$.
\end{proposition}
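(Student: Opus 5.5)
The argument is a direct feasibility comparison: the ML-only forecast is itself one of the corrected forecasts that PtC can produce, so the optimum over the larger class cannot be worse. I would first fix the notation implicit in the statement. For a policy $\pi\in\Pi$ mapping contexts into the action set $\mathcal{A}$, let
\begin{equation*}
L(\pi)=\mathbb{E}\left[\ell\left(\hat{y}^{ML}_t\left(1+\pi(s_t)\right),y_t\right)\right],
\qquad
L_{PtC}^{*}=\inf_{\pi\in\Pi}L(\pi),
\qquad
L_{ML}=\mathbb{E}\left[\ell\left(\hat{y}^{ML}_t,y_t\right)\right],
\end{equation*}
where the expectation is over the joint law of $(s_t,y_t)$, or the empirical law on the feedback set $\mathcal{D}$. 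The corrected forecast is the one in Eq. (\ref{eq:cb_corrected_demand}). Nonnegativity of $\ell$ guarantees that every $L(\pi)$ is well defined in $[0,\infty]$, so the infimum makes sense.

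The key step is to evaluate $L$ at the zero-correction policy $\pi_0$. Since $0\in\mathcal{A}$, the policy $\pi_0(s)=0$ is an admissible action map, and by hypothesis $\pi_0\in\Pi$. Substituting $\pi_0(s_t)=0$ into Eq. (\ref{eq:cb_corrected_demand}) gives $\hat{y}^{CB}_t=\hat{y}^{ML}_t(1+0)=\hat{y}^{ML}_t$ pointwise for every realization. The integrands defining $L(\pi_0)$ and $L_{ML}$ therefore coincide pointwise, so $L(\pi_0)=L_{ML}$.

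Since $\pi_0$ is feasible, the definition of the infimum gives $L_{PtC}^{*}\le L(\pi_0)=L_{ML}$. If $L_{PtC}^{*}$ is instead read as the loss of an attained minimizer $\pi^{*}$, as in Eq. (\ref{eq:cb_objective}), the same inequality holds because $L(\pi^{*})\le L(\pi_0)$. For the absolute loss, this recovers the equivalence with Eq. (\ref{eq:cb_loss}): the optimal reward is nonnegative because $\pi_0$ attains reward $0$.

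There is no real analytical obstacle; the only care needed is in the definitions. First, $L_{ML}$ must be computed under the same distribution over $(s_t,y_t)$ as the PtC loss, so that the pointwise identity transfers to the expectations. Second, the optimum should be read as an infimum, so that no existence-of-minimizer argument is required. Third, the clipping in Eq. (\ref{eq:stochastic_action}) does not interfere, since $0\in[-1,2]$. With a stochastic policy, the degenerate Gaussian with $\mu=0$ and $\sigma\to0$ is handled by the assumption that $\pi_0\in\Pi$ directly.
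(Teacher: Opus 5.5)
Your proposal is correct and follows the paper's proof: both note that the zero-correction policy $\pi_0$ is feasible in $\Pi$, that it reproduces $\hat y_t^{ML}$ exactly so its loss equals $L_{ML}$, and that the infimum over $\Pi$ therefore cannot exceed $L_{ML}$. Your extra remarks (same underlying distribution, reading the optimum as an infimum, and $0\in[-1,2]$ surviving the clip) are sound bookkeeping that the paper leaves implicit.
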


Proposition \ref{prop:zero-correction} establishes a hypothesis-class containment result at the population optimum. Since the zero-correction policy $\pi_0(s)=0$ is included in the PtC policy class, PtC contains the ML-only forecast as a special case, and its optimal population loss is no larger than that of the ML-only forecast. When the correction action is zero for every context, the corrected forecast exactly reduces to the original ML forecast. This result should be interpreted as an expressiveness and containment property of the PtC hypothesis class, rather than as a finite-sample performance guarantee: it does not imply that the estimated PtC model will necessarily outperform the ML-only baseline with limited samples or at every online decision step. A proof of Proposition \ref{prop:zero-correction} is provided in Appendix \ref{app:proof1}.

In implementation, historical demand cycles are replayed in chronological order to initialize the policy, and the same feedback structure is used during online deployment. This design preserves the decision timing of the CB formulation: the policy observes $s_t$, selects $a_t^*$, receives $r_t$ after demand is realized and then updates itself for future correction decisions.

\subsubsection{Top-$p$ update}

In a new demand cycle, customer demand is gradually observed over time. The CB correction model must update its parameters to incorporate new information from the demand environment. However, in such a data-scarce scenario, frequent parameter updates may lead to overfitting or catastrophic forgetting of previously learned knowledge \citep{kirkpatrick2017overcoming, tian2024survey, bethune2025scaling}. To mitigate this risk, we adopt a parameter update rule based on the proportion of parameters.

We adopt a top-$p$ masked update rule to control the degree of online adaptation. Specifically, we select the top-$p$ proportion of parameters with the smallest absolute values and update only these parameters, while the remaining parameters are frozen. Small-magnitude parameter selection is a heuristic intended to preserve highly expressed pretrained parameters; its effectiveness must therefore be established empirically. Intuitively, parameters with larger absolute values are treated as more strongly expressed components of the pretrained correction policy, whereas smaller-magnitude parameters provide a limited adaptation subspace for learning new-cycle feedback.

To implement this rule, we first sort the parameters in each layer by their absolute values. The threshold $\delta^k$ is then set to the largest magnitude within the smallest $p\%$ parameters of layer $k$. Parameters whose absolute values are no larger than $\delta^k$ are selected for online updating, while the remaining parameters are kept fixed.

After the threshold is determined, we adopt a binary mask matrix in equation (\ref{eqa:mask}) to record which parameters need to be updated as follows:
\begin{equation}
\label{eqa:mask}
M(\theta_j^k)=
\begin{cases}
1, & |\theta_j^k| \le \delta^k \\
0, & \text{otherwise}
\end{cases}
\end{equation}
where $\theta_j^k$ is the $j$-th parameter in layer $k$. If $M(\theta_j^k) = 1$, the parameter $\theta_j^k$ is selected to be updated. The shift of the update rule can be found in equation (\ref{eqa:update}).
\begin{equation}
\label{eqa:update}
\theta_j^k \leftarrow \theta_j^k - \eta\, g_j^k
\quad \rightarrow \quad
\theta_j^k \leftarrow \theta_j^k - \eta\, M(\theta_j^k)\, g_j^k
\end{equation}
where $\eta$ represents the learning rate. In our research, we set the learning rate to a very small value to enable fine-tuning updates, thereby avoiding disruption of the knowledge learned before \citep{zhang2023slca, tao2020few}. 

\begin{proposition}
\label{prop:top-p-tradeoff}
Let $\Theta_p=\{\theta^0+\Delta:\operatorname{supp}(\Delta)\subseteq S_p\}$ and $S_{p_1}\subseteq S_{p_2}$ for $p_1\le p_2$.
Then $G_{ad}(p)=\inf_{\theta\in\Theta_p}\mathcal{L}_{new}(\theta)-\inf_{\theta\in\Theta_1}\mathcal{L}_{new}(\theta)$ is non-increasing in $p$.
If $|g_j^u|\le G$, then $\|\theta_p^U-\theta^0\|_2\le \eta U G\sqrt{\lceil pd\rceil}$, so the forgetting-risk bound is non-decreasing in $p$.
\end{proposition}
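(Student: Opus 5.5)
The plan is to prove the two parts separately. Both rest on elementary facts: an infimum over a larger set is no larger, and a masked gradient step changes only coordinates in the mask.

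\emph{First part (adaptation gap).} Fix $p_1\le p_2$. Since $S_{p_1}\subseteq S_{p_2}$, any $\Delta$ with $\operatorname{supp}(\Delta)\subseteq S_{p_1}$ also has support in $S_{p_2}$, so $\Theta_{p_1}\subseteq\Theta_{p_2}$. Taking infima of $\mathcal{L}_{new}$ over nested sets gives
\begin{equation*}
\inf_{\theta\in\Theta_{p_2}}\mathcal{L}_{new}(\theta)\le\inf_{\theta\in\Theta_{p_1}}\mathcal{L}_{new}(\theta).
\end{equation*}
The subtracted term $\inf_{\theta\in\Theta_1}\mathcal{L}_{new}(\theta)$ does not depend on $p$. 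Subtracting it from both sides therefore yields $G_{ad}(p_2)\le G_{ad}(p_1)$, which is the claimed monotonicity. We also note that $S_1$ is the full index set, so $G_{ad}(p)\ge 0$. I will take the nestedness hypothesis as given; it holds for the layerwise smallest-magnitude rule of Eq.~(\ref{eqa:mask}) when ties are broken consistently. To avoid $\infty-\infty$ I will assume $\mathcal{L}_{new}$ is bounded below, which holds for nonnegative losses.

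\emph{Second part (displacement bound).} Write the $U$ masked updates of Eq.~(\ref{eqa:update}) starting from $\theta^0$, with the mask fixed at $S_p$ computed from $\theta^0$. Unrolling gives
\begin{equation*}
\theta_p^U-\theta^0=-\eta\sum_{u=0}^{U-1}M\odot g^u.
\end{equation*}
Every summand is supported on $S_p$, and $|S_p|\le\lceil pd\rceil$, where $d$ is the number of parameters. For each $j\in S_p$, the coordinate bound $|g_j^u|\le G$ and the triangle inequality give $|(\theta_p^U-\theta^0)_j|\le \eta U G$. Coordinates outside $S_p$ do not move. Summing squares over at most $\lceil pd\rceil$ coordinates gives $\|\theta_p^U-\theta^0\|_2\le\eta UG\sqrt{\lceil pd\rceil}$. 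The bound is non-decreasing in $p$ because $\lceil pd\rceil$ is non-decreasing in $p$. Any forgetting-risk bound that is a non-decreasing function of this displacement, for example a Lipschitz bound $\mathcal{L}_{old}(\theta_p^U)-\mathcal{L}_{old}(\theta^0)\le L\|\theta_p^U-\theta^0\|_2$, inherits the same monotonicity.

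\emph{Main obstacle.} The main difficulty is interpretive rather than technical. The count $|S_p|\le\lceil pd\rceil$ must hold even though the mask in Eq.~(\ref{eqa:mask}) is defined layerwise with ties at $\delta^k$. I would handle this by assuming ties are broken so that exactly $\lceil p d_k\rceil$ entries are selected in each layer $k$. The total is then $\sum_k\lceil pd_k\rceil$, which may slightly exceed $\lceil pd\rceil$. So I would either state the bound with $|S_p|$ in place of $\lceil pd\rceil$, or adopt a global count convention. If the mask were instead recomputed at each step, the union of supports could grow. I will therefore state explicitly that the mask is frozen at initialization, as the fine-tuning description indicates.
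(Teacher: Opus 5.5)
Your proposal is correct and follows the paper's proof essentially step for step. For the adaptation gap, both use the nestedness $\Theta_{p_1}\subseteq\Theta_{p_2}$ and then subtract the $p$-independent benchmark; for the drift, both unroll the masked updates and use the coordinatewise bound $|g^u_j|\le G$. The paper bounds each step by $\|m_p\odot g^u_p\|_2\le G\sqrt{\lceil pd\rceil}$ and applies the triangle inequality over the $U$ steps, which would also cover a mask recomputed at every step, so your frozen-mask caveat is harmless but not needed. The paper also simply asserts that the mask has exactly $\lceil pd\rceil$ nonzero entries, so your observation that layerwise selection can give $\sum_k\lceil pd_k\rceil>\lceil pd\rceil$ is an imprecision in the paper, not a gap in your argument.
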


Proposition \ref{prop:top-p-tradeoff} characterizes the adaptation--stability tradeoff induced by the top-$p$ update rule. The first part states that increasing $p$ enlarges the set of trainable parameters. Since $\Theta_{p_1}\subseteq\Theta_{p_2}$ when $p_1\le p_2$, a larger update set cannot increase the best achievable loss on the new demand cycle. Therefore, the adaptation gap $G_{ad}(p)$ is non-increasing in $p$. This explains why updating too few parameters may limit the ability of the correction policy to adapt to a new demand pattern. proof of Proposition \ref{prop:top-p-tradeoff} is provided in Appendix \ref{app:proof2}.

\begin{proposition}
\label{prop:continuous-action-convergence}
Let $\mathcal{J}_{\epsilon}(\theta)=\mathbb{E}_{\mathcal{D},a_t\sim\pi_{\theta}}[\rho_{\epsilon}(\hat y_t^{ML}(1+a_t)-y_t)]$ be lower bounded and $L$-smooth.
Under $\theta^{u+1}=\theta^u-\eta\,m_u\odot\nabla\mathcal{J}_{\epsilon}(\theta^u)$ with $0<\eta\le1/L$,
the PtC correction policy satisfies $\mathcal{J}_{\epsilon}(\theta^{u+1})\le\mathcal{J}_{\epsilon}(\theta^u)$ for $u=0,1,\ldots$.
\end{proposition}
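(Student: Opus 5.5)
The plan is to apply the standard descent lemma for $L$-smooth functions to the masked gradient step. Write $g^u=\nabla\mathcal{J}_{\epsilon}(\theta^u)$. Set the search direction to $d^u=-\eta\, m_u\odot g^u$, where $m_u\in\{0,1\}^d$ is the binary top-$p$ mask of Eq.~(\ref{eqa:mask}) evaluated at the current iterate. $L$-smoothness of $\mathcal{J}_{\epsilon}$ gives the quadratic upper bound
\begin{equation*}
\mathcal{J}_{\epsilon}(\theta^{u+1})\le \mathcal{J}_{\epsilon}(\theta^u)+\langle g^u, d^u\rangle+\tfrac{L}{2}\|d^u\|_2^2 .
\end{equation*}
This follows by integrating the gradient along the segment from $\theta^u$ to $\theta^{u+1}$ and applying the Lipschitz bound on $\nabla\mathcal{J}_{\epsilon}$. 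Lower boundedness of $\mathcal{J}_{\epsilon}$ is not needed for monotonicity. Its role is only to ensure the decreasing sequence $\mathcal{J}_{\epsilon}(\theta^u)$ converges, which I would mention as a corollary.

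Next I evaluate the two terms using the binary structure of the mask. Since $m_{u,j}^2=m_{u,j}$, we have $\langle g^u, m_u\odot g^u\rangle=\sum_j m_{u,j}(g^u_j)^2=\|m_u\odot g^u\|_2^2$. Likewise $\|d^u\|_2^2=\eta^2\|m_u\odot g^u\|_2^2$. Substituting gives
\begin{equation*}
\mathcal{J}_{\epsilon}(\theta^{u+1})\le \mathcal{J}_{\epsilon}(\theta^u)-\eta\Bigl(1-\tfrac{L\eta}{2}\Bigr)\|m_u\odot g^u\|_2^2 .
\end{equation*}
With $0<\eta\le 1/L$ we have $1-L\eta/2\ge 1/2>0$, so the decrement is at least $\tfrac{\eta}{2}\|m_u\odot g^u\|_2^2\ge 0$, which yields the claim for every $u$.

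The key point, and the only place where the top-$p$ structure matters, is that the masked direction is still a descent direction. This holds because the mask is a coordinate projection, which is idempotent and self-adjoint. It holds regardless of how $m_u$ is chosen, including when the mask changes across iterations, since the argument uses only the mask at step $u$.

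The main obstacle is a modeling subtlety rather than the algebra. $\mathcal{J}_{\epsilon}$ is an expectation over $a_t\sim\pi_\theta$ with the clip in Eq.~(\ref{eq:stochastic_action}). The smoothed loss $\rho_\epsilon$ is what makes $L$-smoothness plausible, but the clip could break differentiability. I would not prove smoothness; I would take it as the stated hypothesis. I would also remark that the update uses the exact population gradient $\nabla\mathcal{J}_{\epsilon}$, not a stochastic score-function estimate. With noisy gradients, the result would instead hold in expectation with an extra variance term.
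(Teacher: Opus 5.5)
Your proposal is correct and follows the paper's proof step for step: the descent lemma for the $L$-smooth $\mathcal{J}_{\epsilon}$, the binary-mask identity $\nabla\mathcal{J}_{\epsilon}(\theta^u)^{\top}\bigl(m_u\odot\nabla\mathcal{J}_{\epsilon}(\theta^u)\bigr)=\|m_u\odot\nabla\mathcal{J}_{\epsilon}(\theta^u)\|_2^2$, and the bound $1-L\eta/2\ge 1/2$ for $0<\eta\le 1/L$. Your side remarks are accurate and go slightly beyond what the paper states: lower boundedness is not needed for monotonicity, the clip is absorbed into the smoothness hypothesis, and the guarantee holds for the exact rather than a stochastic score-function gradient.
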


Proposition \ref{prop:continuous-action-convergence} provides a stability guarantee for the masked update when the correction objective is represented by a smooth surrogate loss. The absolute forecasting error is not differentiable at zero, so $\rho_{\epsilon}(\cdot)$ is introduced as a smooth approximation of the correction error. If this surrogate objective is lower bounded and $L$-smooth, then the standard descent lemma implies that a masked gradient step with step size $0<\eta\le 1/L$ does not increase the objective value. A proof of Proposition \ref{prop:continuous-action-convergence} is provided in Appendix \ref{app:proof3}.

Pseudo-code of top-$p$ masked update is described in Appendix \ref{app:algorithm}.

\section{Experiment} \label{sec:experiments}
\subsection{Experimental design}
\subsubsection{Datasets and implementation}
All algorithms were implemented in Python 3.11 (64-bit). The experiments were conducted on a workstation equipped with an Intel Core i7-14700KF CPU @ 5.6 GHz, 32 GB DDR5 RAM, and an NVIDIA RTX 4070 Ti GPU.

We evaluate the proposed PtC framework using the Walmart sales data released through the M5 Accuracy competition in 2020 \citep{makridakis2022m5}. Specifically, we use 6,875 SKUs from Walmart stores ``CA\_1" to ``CA\_3", covering more than 1,500 daily observations.

The beverage dataset is an external industrial benchmark collected from a major North American beverage manufacturer. It contains 156 weeks of weekly shipment records 49 bottling plants and over 12,000 distribution nodes. 

The lumpy category is defined in Section 3 for completeness of SKU segmentation, but it is omitted from the later experiments because lumpy demand has long zero-demand periods and irregular spikes, making statistical and AI/ML models largely ineffective; therefore, we recommend a qualitative CPFR-based approach instead of reporting model-comparison results.

Details of dataset splitting can be found in Appendix \ref{app:dataset_splitting}, and detailed descriptions of the parameter values of the set are provided in Appendix \ref{app:hyperparameters}.

\subsubsection{Evaluation metrics}
We evaluate the precision of the forecast by mean absolute percentage error (MAPE), mean absolute error (MAE), and root mean squared error (RMSE). For a test sequence with actual demand $y_t$ and forecast $\hat{y}_t$, the metrics are defined as:
\begin{equation}
MAPE=\frac{100}{T}\sum_{t=1}^{T}
\left|
\frac{y_t-\hat{y}_t}{\max(|y_t|,\epsilon)}
\right|,
\end{equation}
\begin{equation}
MAE=\frac{1}{T}\sum_{t=1}^{T}|y_t-\hat{y}_t|,
\quad
RMSE=\sqrt{\frac{1}{T}\sum_{t=1}^{T}(y_t-\hat{y}_t)^2}.
\end{equation}
where $\epsilon$ is a small positive constant used to avoid division by zero. 

MAPE provides a scale-free percentage error, MAE reflects the average absolute deviation, and RMSE emphasizes large forecast errors.

For the demand-category comparison, we report the relative error reduction:
\begin{equation}
\label{eq:relative_reduction}
Reduction(M,q)
=
\frac{E_q^{ref}-E_q^{M}}{E_q^{ref}}\times 100\%,
\end{equation}
where $E_q^{M}$ denotes the error of method $M$ under metric $q$, and
$E_q^{\mathrm{ref}}$ denotes the corresponding error of the ML-only reference forecast.
A positive value indicates an improvement over the reference forecast, whereas a negative
value indicates a deterioration in forecasting performance.

\subsubsection{Benchmark methods}

ETS \citep{gardner1985exponential} is an exponential smoothing method that recursively updates level, trend, and seasonal components. A representative additive ETS forecast is given by
\begin{equation}
\hat{y}*{t+h}^{ETS}=\ell_t+h b_t+s*{t+h-m},
\end{equation}
where $\ell_t$, $b_t$, and $s_t$ denote the level, and seasonal states, respectively, and $m$ is the seasonal period.

ARIMA is a classical linear time-series model that captures autocorrelation through autoregressive and moving-average terms after differencing. The model can be written as
\begin{equation}
\phi(B)(1-B)^d y_t=c+\theta(B)\varepsilon_t,
\end{equation}
where $B$ is the backshift operator, $d$ is the differencing order, and $\varepsilon_t$ is the innovation term.

LightGBM \citep{ke2017lightgbm} is a gradient boosting decision tree model that builds an additive ensemble of regression trees. Its prediction can be expressed as
\begin{equation}
\hat{y}*t=\sum*{k=1}^{K} f_k(x_t), \qquad f_k \in \mathcal{F},
\end{equation}
where $x_t$ is the input feature vector and $f_k(\cdot)$ denotes the $k$-th regression tree.

XGBoost \citep{chen2016xgboost} is another boosted-tree model that learns additive regression trees by minimizing a regularized objective:
\begin{equation}
\mathcal{L}^{(k)}=\sum_{t=1}^{T}l\left(y_t,\hat{y}_t^{(k-1)}+f_k(x_t)\right)+\Omega(f_k),
\end{equation}
where $l(\cdot)$ is the loss function and $\Omega(f_k)$ penalizes tree complexity.

N-BEATS is a deep neural forecasting model composed of stacked fully connected residual blocks. Each block generates a backcast for residual updating and a forecast, while the final prediction is obtained by summing the block-level forecasts.

TiDE is a dense encoder--decoder forecasting model that maps historical demand and covariates into multi-step forecasts.

TFT is an attention-based deep forecasting model that integrates static features, time-varying covariates, gating mechanisms, and temporal attention for multi-horizon forecasting.

\subsection{Results}

\subsubsection{Numerical experiment on the Walmart dataset}
Table \ref{tab:walmart_relative_reduction} reports the average relative error reduction on the Walmart retail dataset across three distinct demand categories. The results are calculated based on Eq. (\ref{eq:relative_reduction}), where positive values indicate a performance improvement over the LightGBM ML-only reference forecast, and negative values denote an algorithmic decrease.

\begin{table*}[htbp]
\centering
\caption{Comparison result of Average relative error reduction on the Walmart dataset.}
\label{tab:walmart_relative_reduction}
\small
\setlength{\tabcolsep}{4pt}
\begin{tabular}{lccccccccc}
\toprule
\multirow{2}{*}{Method}
& \multicolumn{3}{c}{Stable \& Low Volume}
& \multicolumn{3}{c}{Stable \& High Volume}
& \multicolumn{3}{c}{Erratic \& Intermittent} \\
\cmidrule(lr){2-4}
\cmidrule(lr){5-7}
\cmidrule(lr){8-10}
& MAPE & MAE & RMSE
& MAPE & MAE & RMSE
& MAPE & MAE & RMSE \\
\midrule

ETS
& 0.05 & 4.92 & 1.83
& -9.66 & -1.10 & -2.08
& -14.24 & 3.82 & 1.18 \\

N-BEATS
& 7.31 & 5.23 & 1.42
& -5.36 & 5.40 & 1.30
& -26.85 & 5.19 & -1.60 \\

XGBoost
& -12.54 & 1.92 & -0.81
& -1.84 & 0.49 & 0.47
& -8.78 & -1.24 & -13.75 \\

ARIMA
& 1.03 & 4.68 & 1.90
& -9.31 & -4.17 & -9.32
& -14.77 & 3.67 & 1.09 \\

TiDE
& 11.57 & 10.98 & 1.95
& -5.82 & 5.17 & 0.86
& -21.97 & 4.38 & -1.29 \\

TFT
& 6.67 & 10.78 & 0.86
& -10.09 & -0.43 & -5.38
& -25.38 & 4.81 & -1.68 \\

PtC
& \textbf{12.16}\textsuperscript{***}
& \textbf{11.28}\textsuperscript{***}
& \textbf{6.87}\textsuperscript{***}
& \textbf{6.74}\textsuperscript{***}
& \textbf{9.95}\textsuperscript{***}
& \textbf{7.67}\textsuperscript{***}
& \textbf{5.68}\textsuperscript{***}
& \textbf{8.94}\textsuperscript{***}
& \textbf{3.70}\textsuperscript{***} \\

\bottomrule
\end{tabular}

\vspace{0.25em}
\begin{minipage}{0.95\textwidth}
\footnotesize
Note: Positive values indicate improvement relative to the ML-only reference forecast, while negative values indicate performance decrease.
* $p<0.10$, ** $p<0.05$, *** $p<0.01$.
\end{minipage}
\end{table*}

Across all demand patterns, the proposed PtC framework consistently shows superior adaptability compared to both traditional statistical methods and DL architectures. Notably, PtC is the only method evaluated that achieves positive relative error reduction across all three metrics in every reported setting. Furthermore, all observed improvements are statistically significant at the $p < 0.01$ level.

The empirical results expose the structural vulnerabilities of static baseline models when subjected to the highly stochastic nature of retail supply chains. Deep forecasting architectures, such as N-BEATS, TiDE, and TFT, exhibit moderate success under ``Stable \& Low Volume" conditions but degrade substantially when applied to ``Erratic \& Intermittent" demand. Specifically, N-BEATS shows a -26.85\% decrease in MAPE for intermittent items. This underscores a limitation of pure data-driven models: their susceptibility to overfitting the noisy, sparse signals inherent in intermittent demand, which subsequently induces severe predictive bias. Conversely, traditional time-series methods like ETS and ARIMA struggle significantly with ``Stable \& High Volume" series. This is likely driven by their constrained parametric forms, which fail to capture the complex, non-linear dynamics characterizing high-throughput retail operations.

In contrast, the performance of the PtC architecture achieves average error reductions between 5.68\% and 8.94\% even in the highly challenging erratic category, validating its underlying mathematical design. Rather than overriding the historical distributions captured by the base ML model, the FSCCB correction layer dynamically fine-tunes the forecast via online reward feedback. This decoupled architecture effectively bridges the methodological gap between offline historical generalization and online correction.

\subsubsection{Numerical experiment on the beverage dataset}
To better validate the performance of the proposed correction mechanism, we conduct a supplementary numerical experiment on a beverage distribution dataset. Table \ref{tab:beverage_relative_reduction} details the average relative error reduction under the previously established evaluation framework.

\begin{table*}[htbp]
\centering
\caption{Comparison result of Average relative error reduction on the beverage dataset.}
\label{tab:beverage_relative_reduction}
\small
\setlength{\tabcolsep}{4pt}
\begin{tabular}{lccccccccc}
\toprule
\multirow{2}{*}{Method}
& \multicolumn{3}{c}{Stable \& Low Volume}
& \multicolumn{3}{c}{Stable \& High Volume}
& \multicolumn{3}{c}{Erratic \& Intermittent} \\
\cmidrule(lr){2-4}
\cmidrule(lr){5-7}
\cmidrule(lr){8-10}
& MAPE & MAE & RMSE
& MAPE & MAE & RMSE
& MAPE & MAE & RMSE \\
\midrule

LightGBM
& 6.18 & 5.05 & 6.12
& -- & -- & --
& 3.91 & -1.01 & 2.93 \\

ETS
& -- & -- & --
& 7.82 & 2.89 & -3.67
& -- & -- & -- \\

N-BEATS
& 5.44 & 6.25 & 5.45
& 10.88 & -5.57 & -11.48
& -10.39 & 4.01 & 2.20 \\

XGBoost
& -5.94 & 4.71 & -5.59
& -6.24 & -7.62 & -8.46
& 0.79 & 0.64 & 2.79 \\

ARIMA
& 6.80 & -5.01 & 5.37
& 7.85 & -18.07 & -21.99
& 6.27 & 4.99 & 1.72 \\

TiDE
& 6.91 & 5.43 & 5.58
& 11.19 & -3.46 & -6.14
& 1.24 & -12.92 & -0.04 \\

TFT
& 5.49 & 6.76 & -6.00
& 8.19 & 5.88 & 2.73
& -10.33 & 7.06 & 2.27 \\

PtC
& \textbf{13.64}\textsuperscript{***}
& \textbf{7.85}\textsuperscript{***}
& \textbf{7.09}\textsuperscript{***}
& \textbf{14.64}\textsuperscript{***}
& \textbf{14.85}\textsuperscript{***}
& \textbf{13.40}\textsuperscript{***}
& \textbf{8.92}\textsuperscript{***}
& \textbf{12.37}\textsuperscript{***}
& \textbf{4.48}\textsuperscript{***} \\

\bottomrule
\end{tabular}

\vspace{0.25em}
\begin{minipage}{0.95\textwidth}
\footnotesize
Note: Positive values indicate improvement relative to the ML-only reference forecast, while negative values indicate a performance decrease.
* $p<0.10$, ** $p<0.05$, *** $p<0.01$.
\end{minipage}
\end{table*}

In this data environment, traditional methods like ARIMA and tree-based models like XGBoost exhibit severe performance decrease in the ``Stable \& High Volume" category.  Furthermore, DL models such as N-BEATS, TFT, yield highly inconsistent results across different metrics within identical demand categories. Such metric instability indicates that these base models are highly sensitive to shifting data distributions and lack the continuous learning capability required to mitigate real-time prediction errors under structural volatility.

Conversely, PtC maintains substantial, balanced, and statistically significant ($p < 0.01$) improvements among all conditions. The error reductions peak in the ``Stable \& High Volume" category, achieving 14.64\%, 14.85\%, and 13.40\% error reduction for MAPE, MAE and RMSE, respectively. By retaining the base ML model's structural understanding of baseline demand while deploying an online CB to correct for micro-level volatility, PtC shifts the forecasting paradigm from a static, open-loop prediction task to an adaptive, closed-loop correction system.

\subsection{Sensitivity analysis}
We further analyze the sensitivity of the top-$p$ parameter update proportion and the depth of the correction network.

Figure \ref{fig:PSA} compares the sensitivity of the update proportion $p$ on the North American beverage company dataset and the Walmart retail dataset. The two curves show that updating too few parameters limits the performance of the correction model, while updating too many parameters increases the risk of overfitting or forgetting previously learned demand patterns. On the beverage-company dataset, the improvement rises rapidly from a small update proportion and reaches its peak $p=0.16$, after which performance declines steadily. On the Walmart dataset, the improvement is already significant under small update proportions and reaches its highest value $p=0.10$; further increasing $p$ leads to a gradual decrease in improvement. This cross-dataset evidence suggests that FSCCB benefits from selective adaptation rather than full-network updating.

\begin{figure}[htbp]
    \centering
    \begin{subfigure}[t]{0.48\textwidth}
        \centering
        \includegraphics[width=\linewidth]{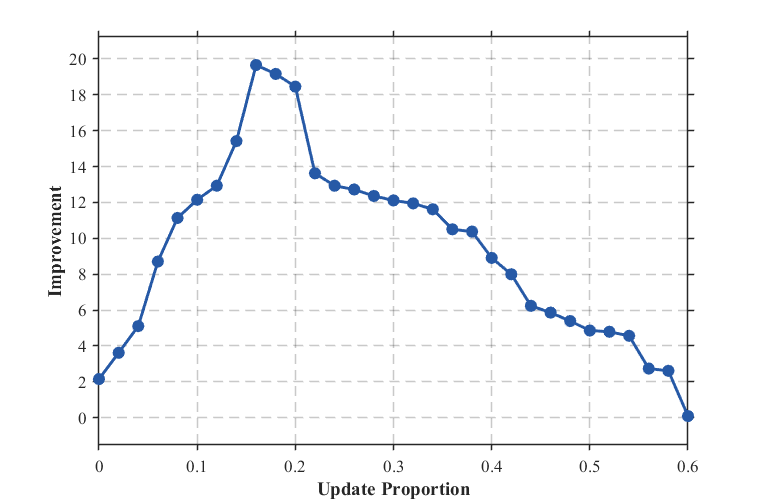}
        \caption{North American beverage company dataset}
        \label{fig:PSA_beverage}
    \end{subfigure}
    \hfill
    \begin{subfigure}[t]{0.46\textwidth}
        \centering
        \includegraphics[width=\linewidth]{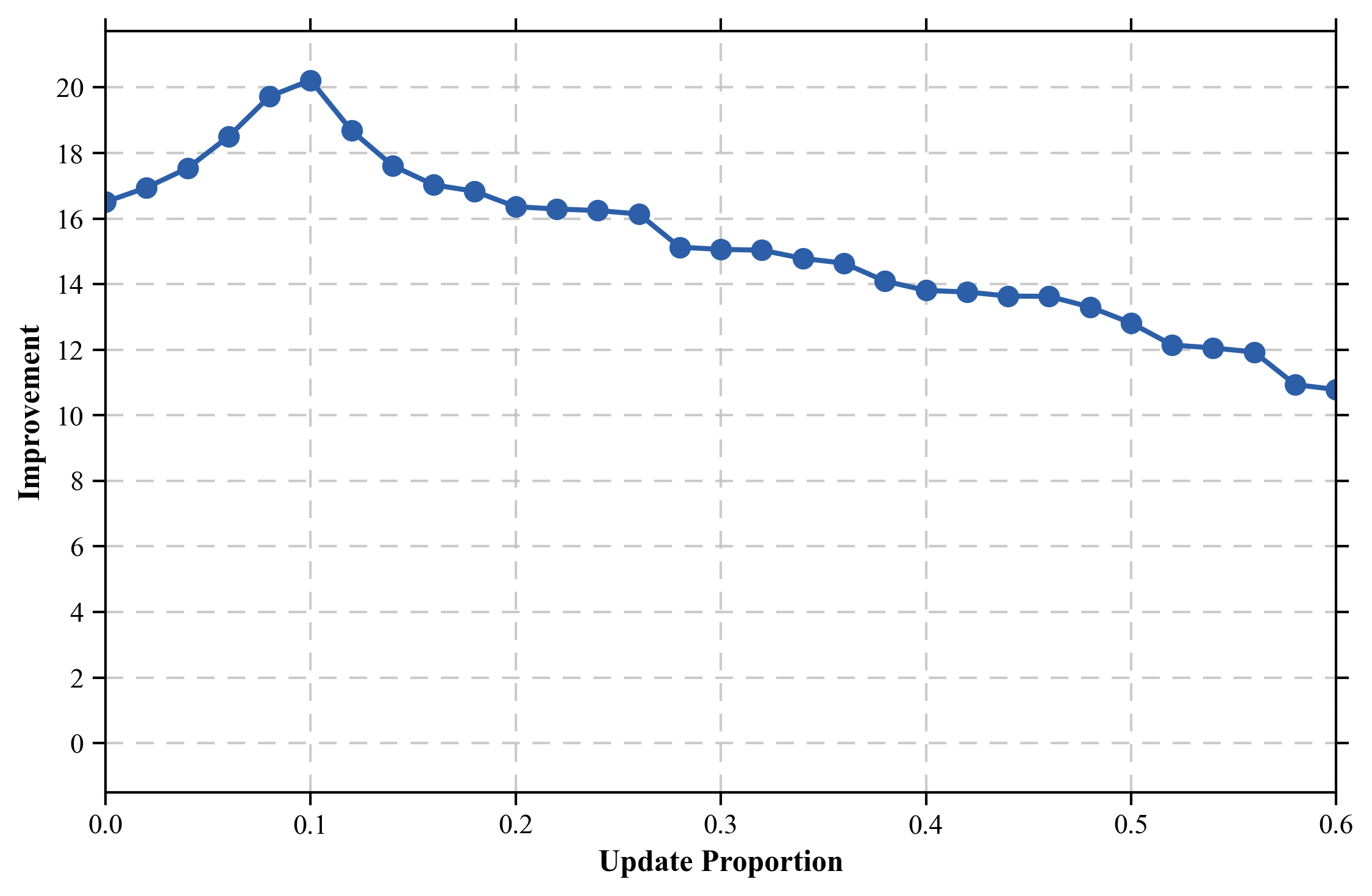}
        \caption{Walmart retail dataset}
        \label{fig:PSA_walmart}
    \end{subfigure}
    \caption{Parameter sensitivity analysis of the update proportion $p$ across two datasets.}
    \label{fig:PSA}
\end{figure}

Figure \ref{fig:LSA} reports the sensitivity to network depth. A shallow correction network has limited representation capacity and may fail to capture nonlinear relationships between context features and correction actions. However, an overly deep correction network increases the number of trainable parameters and becomes less stable under few-shot updates. The observed pattern supports the use of a moderate-depth policy network for the CB correction model.
\begin{figure}[htbp]
    \centering
    \includegraphics[width=0.5\textwidth]{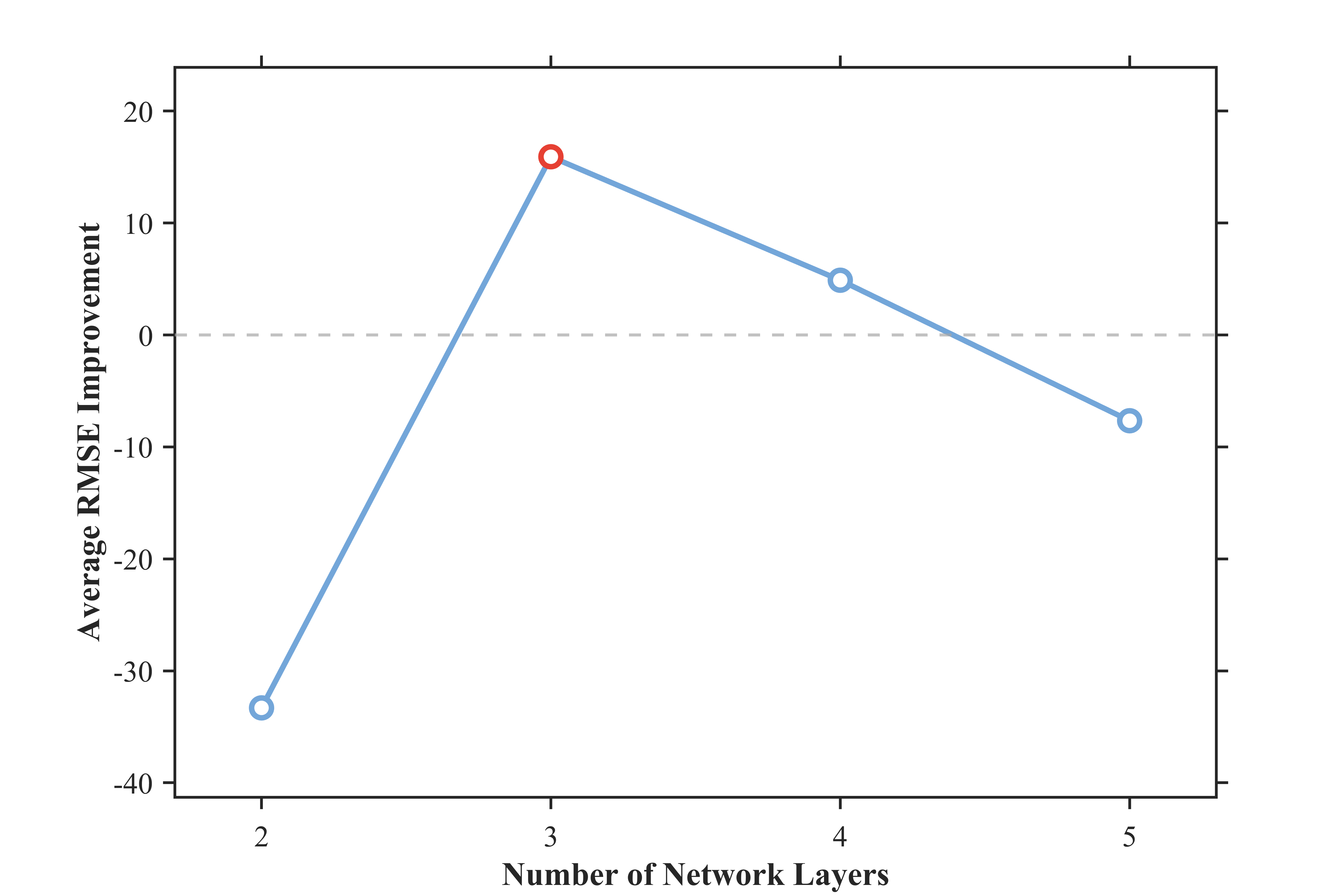}
    \caption{Network depth sensitivity analysis.}
    \label{fig:LSA}
\end{figure}

\subsection{Ablation study}
To isolate the contribution of the CB correction layer and the few-shot selective update rule, we conduct an ablation study using N-BEATS as the first-stage ML forecast. We compare three variants as follows:

(1) ML-only: Forecasting relies on the N-BEATS prediction without second-stage correction;

(2) ML+CB: A CB correction model is applied, but all correction-network parameters are updated during adaptation;

(3) ML+FSCCB: FSCCB is applied, where only the selected top-$p$ parameters are updated during adaptation.

The reported Gap is the relative RMSE reduction compared with the ML-only baseline. A positive Gap indicates that the correction module improves the N-BEATS forecast.

Table \ref{tab:ab_avg} reports the average ablation results of the proposed few-shot continuous CB method. Compared with the original N-BEATS model, the standard N-BEATS-CB only achieves a marginal RMSE reduction of 0.34\%. Moreover, although its average RMSE is slightly lower than that of N-BEATS, its average MAE becomes higher. This indicates that simply introducing a CB-based correction network does not provide a sufficiently stable improvement under limited observations from a new demand cycle.

\begin{table}[htbp]
    \centering
    \small
    \caption{Average prediction performance comparison of N-BEATS, N-BEATS-CB, and PtC.}
    \label{tab:ab_avg}
    \setlength{\tabcolsep}{5pt}
    \begin{tabular}{cc ccc ccc}
        \toprule
        \multicolumn{2}{c}{N-BEATS} 
        & \multicolumn{3}{c}{N-BEATS-CB} 
        & \multicolumn{3}{c}{PtC} \\
        \cmidrule(lr){1-2} \cmidrule(lr){3-5} \cmidrule(lr){6-8}
        RMSE & MAE 
        & RMSE & MAE & Gap 
        & RMSE & MAE & Gap \\
        \midrule
        11.4086 & 8.7993 & 11.3903 & 8.8363 & 0.34\% & \textbf{10.3372} & \textbf{7.8193} & \textbf{9.52\%} \\
        \bottomrule
    \end{tabular}
\end{table}

By contrast, PtC achieves a much larger average improvement. It reduces the average RMSE from 11.4086 to 10.3372 and the average MAE from 8.7993 to 7.8193, corresponding to an average RMSE Gap of 9.52\%. This result shows that the proposed FSCCB method can more effectively adapt the forecasting model to a new demand cycle.

The improvement mainly comes from the few-shot selective update mechanism. Instead of updating all parameters of the correction network, FSCCB updates only a small subset of low-magnitude parameters while freezing the remaining parameters. This strategy helps preserve the general correction knowledge learned from historical demand cycles and prevents the model from overfitting short-term noise in the new cycle. Therefore, the ablation results confirm that FSCCB provides a more stable and effective correction mechanism than the standard CB model, making the proposed PtC framework better suited for demand forecasting under data scarcity.

\section{PtC in Inventory Management System}
\label{sec:inventory_ptc}
\subsection{Model Formulation in Inventory Management System}
The forecasting experiments show that PtC improves prediction accuracy, but the operational value of a forecast depends on how it affects downstream decisions. We therefore embed the corrected forecasts into a periodic-review inventory management system. At the beginning of period $t$, the retailer observes the inventory position $IP_t$, outstanding replenishment orders, and the current demand context. A replenishment policy then chooses an order quantity $q_t\geq 0$. Orders arrive after a deterministic lead time $L$, and demand $y_t$ is realized at the end of the period.

Let $I_t$ denote the net inventory after demand is realized, where positive values represent inventory and negative values represent unmet demand. The inventory transition is written as:
\begin{equation}
\label{eq:inventory_balance}
I_{t+1}=I_t+q_{t-L}-y_t,
\end{equation}
where $q_{t-L}$ is the order placed $L$ periods earlier. The single-period inventory cost is:
\begin{equation}
\label{eq:inventory_cost}
C_t=h[I_{t+1}]^{+}+b[-I_{t+1}]^{+},
\end{equation}
where $h$ is the unit holding-cost coefficient, $b$ is the unit shortage-cost coefficient, and $[x]^+=\max\{x,0\}$. The objective is to minimize the average cost over the evaluation horizon:
\begin{equation}
\label{eq:average_inventory_cost}
\bar C=\frac{1}{T}\sum_{t=1}^{T}C_t.
\end{equation}

PtC enters this system through the demand signal used for replenishment. The baseline ML forecast $\hat y_t^{ML}$ is first corrected by the learned contextual-bandit action $a_t$, producing:
\begin{equation}
\label{eq:ptc_inventory_forecast}
\hat y_t^{PtC}=\hat y_t^{ML}(1+a_t).
\end{equation}
The corrected forecast is then used to estimate lead-time demand. For a lead time $L$, the PtC-based replenishment target can be expressed as:
\begin{equation}
\label{eq:ptc_inventory_target}
\hat D_{t,L}^{PtC}=\sum_{j=0}^{L-1}\hat y_{t+j}^{PtC},
\end{equation}
and the corresponding order quantity is:
\begin{equation}
\label{eq:ptc_inventory_order}
q_t^{PtC}=\max\{0,\hat D_{t,L}^{PtC}-IP_t\}.
\end{equation}
This formulation keeps the inventory decision simple while allowing the order quantity to respond to online forecast corrections.

\subsection{Baseline Policies and Experiment Settings}
We compare PtC with three inventory-control baselines: base-stock (BS), proximal policy optimization (PPO), and soft actor-critic (SAC). BS is a classical order-up-to policy that replenishes inventory toward a target level based on the estimated lead-time demand. It provides a transparent operational benchmark for forecast-driven inventory control.

PPO and SAC represent deep reinforcement learning policies for inventory management, a setting in which the policy learns order decisions from repeated interaction with the inventory simulator \citep{boute2022deep, gijsbrechts2022can}. PPO is an on-policy actor-critic algorithm that updates the policy through a clipped surrogate objective, which improves training stability by limiting overly large policy updates. SAC is an off-policy actor-critic algorithm that maximizes both expected return and policy entropy, encouraging exploration in continuous-control problems \citep{haarnoja2018soft}. In this experiment, both RL baselines observe the same inventory state information and are trained to minimize the cumulative inventory cost.

All policies are evaluated under the same demand sequences, cost parameters, and lead-time settings. We vary the lead time from $L=2$ to $L=4$ to test whether the policies remain robust when the replenishment decision must anticipate demand further into the future. Inventory cost is reported in millions, and lower values indicate better downstream operational performance.

\subsection{Numerical Results}
Figure \ref{fig:inventory_cost} reports the inventory-cost comparison across the tested lead-time settings. PtC achieves the lowest inventory cost across the reported settings, indicating that the forecast improvements observed in the earlier experiments translate into downstream cost reductions. The result is important because a lower forecast error does not automatically guarantee a lower inventory cost; the correction must also improve the timing and order quantities of replenishment under the uncertainty of lead time.

\begin{figure}[htbp]
    \centering
    \includegraphics[width=0.95\textwidth]{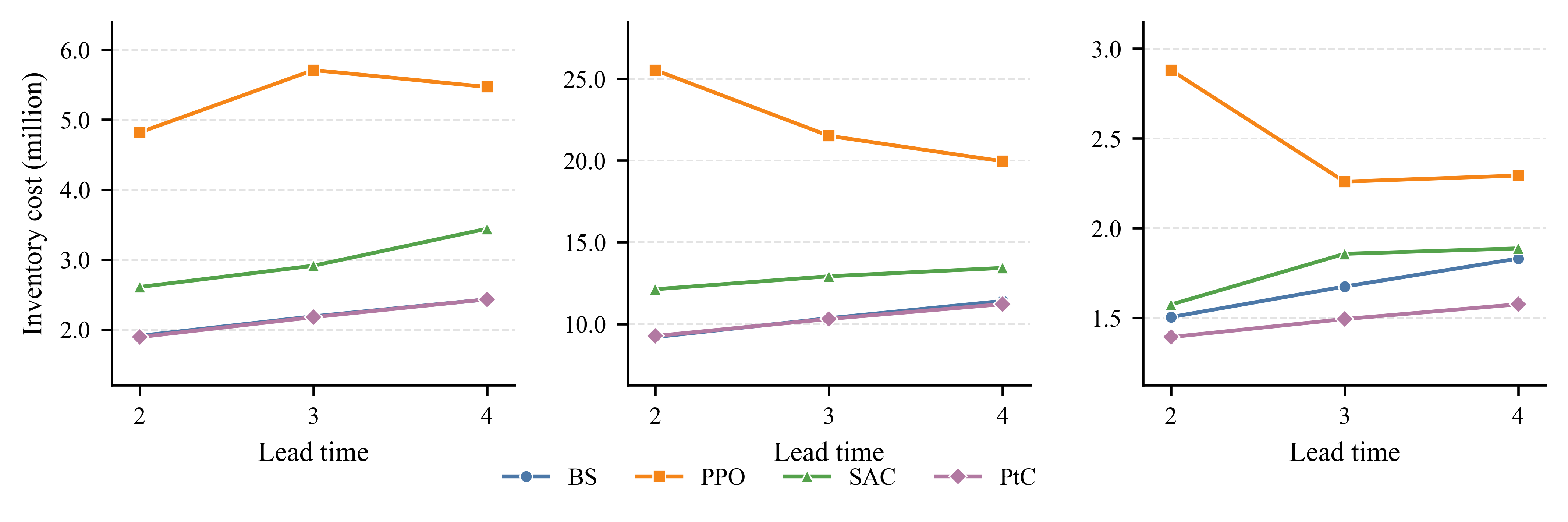}
    \caption{Inventory-cost comparison of BS, PPO, SAC and PtC under different lead times.}
    \label{fig:inventory_cost}
\end{figure}

The comparison also shows different failure modes across baselines. PPO yields the highest inventory costs in the reported settings, suggesting that the learned policy is less stable for this demand-correction task. SAC performs better than PPO but remains above PtC, indicating that a generic continuous-control policy may still struggle when the main uncertainty comes from demand forecasting errors. BS is competitive in some settings because it directly links replenishment to estimated lead-time demand, but it does not adapt the forecast itself when real-time demand feedback reveals bias. PtC combines these two advantages: it preserves a simple forecast-driven replenishment structure and improves the input forecast through online correction. 

As the lead time increases, inventory decisions become more sensitive to forecast bias because orders must cover a longer demand interval. PtC maintains the lowest cost under the tested lead times, suggesting that correcting the forecast before the replenishment decision can reduce both excess inventory and shortage risk. These results provide downstream evidence for the practical value of the PtC loop in inventory management systems.

\section{Conclusion}\label{sec:conclusion}
This paper studies adaptive demand forecasting in retail supply chains, where replenishment, pricing, purchasing, and inventory decisions require timely forecasts but demand patterns can shift before static forecasting models are fully retrained. To address this problem, we propose a PtC framework that preserves the base forecast generated by an offline machine-learning model and then applies a few-shot CB correction policy to adjust the forecasts as new feedback arrives. The main finding is that forecast correction can be treated as a sequential learning problem: instead of replacing the original forecaster, a lightweight correction layer can improve predictive reliability and downstream decision quality while retaining the structure learned from historical demand.

This study contributes to the forecasting and retail operations literature by filling a gap between static forecasting and full model retraining. Existing forecasting models often produce predictions as fixed outputs, while many operational decision models assume that demand information is already available in a usable form. PtC connects these two views by modeling the post-forecast stage as an adaptive correction process, thereby making demand forecasting part of a continuous learning loop. The similar-SKU augmentation and top-$p$ masked update further clarify how a correction policy can balance responsiveness to new observations with the preservation of previously learned demand regularities. In practice, this design can be embedded into existing forecast pipelines with limited disruption, helping retailers update forecasts for early-cycle, sparse, low-volume, or intermittent items without rebuilding the entire forecasting system.

The strength of this study lies in its modular design and its direct connection between forecast accuracy and operational usefulness. Because PtC is placed after the base forecaster, it can work as an adaptive layer rather than as a replacement for existing forecasting models, which makes the framework easier to interpret and deploy. At the same time, the current study has several limitations. The correction policy is built at the SKU level and does not yet explicitly model substitution, complementarity, shelf-space competition, or capacity coupling across products. The downstream decision setting is also simplified, and the robustness of the correction range and top-$p$ update needs further examination under more diverse retail conditions.

Future research can extend this work in several directions. First, multi-product correction policies can be developed by incorporating graph, hierarchical, or attention-based structures to capture dependencies across SKUs. Second, forecast correction can be integrated more closely with replenishment, pricing, allocation, and purchasing decisions so that prediction and decision optimization are learned in a more unified loop. Third, future studies should examine online deployment issues such as delayed feedback, changing assortment structures, promotion shocks, stochastic lead times, and service-level constraints. These extensions would move PtC from adaptive forecast correction toward a broader closed-loop decision framework for retail supply chains.

\section*{Data Availability Statement}
The data that support the findings of this study are available from the corresponding author upon reasonable request.

\bibliographystyle{chicago}
\spacingset{1}
\bibliography{reference}

\clearpage

\begin{center} 
\textbf{Supplemental Online Materials to “A Predict-then-Correct Loop Based on Few-Shot Continuous Contextual Bandit for Demand
Forecasting"} \end{center}

\appendix

\makeatletter
\@addtoreset{table}{section}
\@addtoreset{figure}{section}
\@addtoreset{algorithm}{section}
\makeatother

\renewcommand{\thetable}{\Alph{section}\arabic{table}}
\renewcommand{\thefigure}{\Alph{section}\arabic{figure}}
\renewcommand{\thealgorithm}{\Alph{section}\arabic{algorithm}}

\newcommand{\appendixsection}[1]{%
    \refstepcounter{section}%
    \section*{Appendix~\thesection:\ #1}%
    \addcontentsline{toc}{section}{Appendix~\thesection: #1}%
}

\appendixsection{Dataset Splitting}\label{app:dataset_splitting}

\begin{table}[htbp]
\centering
\caption{Dataset and splitting}
\label{tab:data_split_settings}
\small
\setlength{\tabcolsep}{3.5pt}
\begin{tabular}{llclccc}
\toprule
Dataset & Cluster & Quantity & Training & Validation & Forecast & Sliding window \\
\midrule
\multirow{3}{*}{Walmart}
& Stable \& High Volume       & 217  & Week 1--134 & Week 135--143 & Week 144--152 & 3 \\
& Stable \& Low Volume        & 3122 & Week 1--134 & Week 135--143 & Week 144--152 & 3 \\
& Erratic \& Intermittent     & 3256 & Week 1--134 & Week 135--143 & Week 144--152 & 3 \\
\midrule
\multirow{3}{*}{Beverage}
& Stable \& High Volume       & 52   & Day 1--1765 & Day 1766--1825 & Day 1826--1885 & 3 \\
& Stable \& Low Volume        & 465  & Day 1--1765 & Day 1766--1825 & Day 1826--1885 & 3 \\
& Erratic \& Intermittent     & 168  & Day 1--1765 & Day 1766--1825 & Day 1826--1885 & 3 \\
\bottomrule
\end{tabular}
\end{table}

\appendixsection{Principal Hyperparameter Settings}
\label{app:hyperparameters}

\begin{table}[htbp]
\centering
\caption{Principal hyperparameter settings}
\label{tab:hyperparameter_settings}
\small
\setlength{\tabcolsep}{8pt}
\begin{tabular}{lc}
\toprule
Setting & Value \\
\midrule
Batch size & 64 \\
Learning rate & $3 \times 10^{-4}$ \\
Optimizer & Adam \\
Hidden dimension & 256 \\
Action dimension & 1 \\
Observation dimension & 15 \\
Entropy coefficient $\alpha$ & 0.02 \\
Network layers & 3 \\
Parameter update ratio for the Walmart dataset & 0.10 \\
Parameter update ratio for the beverage dataset & 0.16 \\
Sliding-window length & 3 \\
\bottomrule
\end{tabular}
\end{table}

\clearpage

\appendixsection{Augmentation Data Selection}
\label{app:augmentation_selection}

\begin{figure}[htbp]
    \centering
    \includegraphics[width=\textwidth]{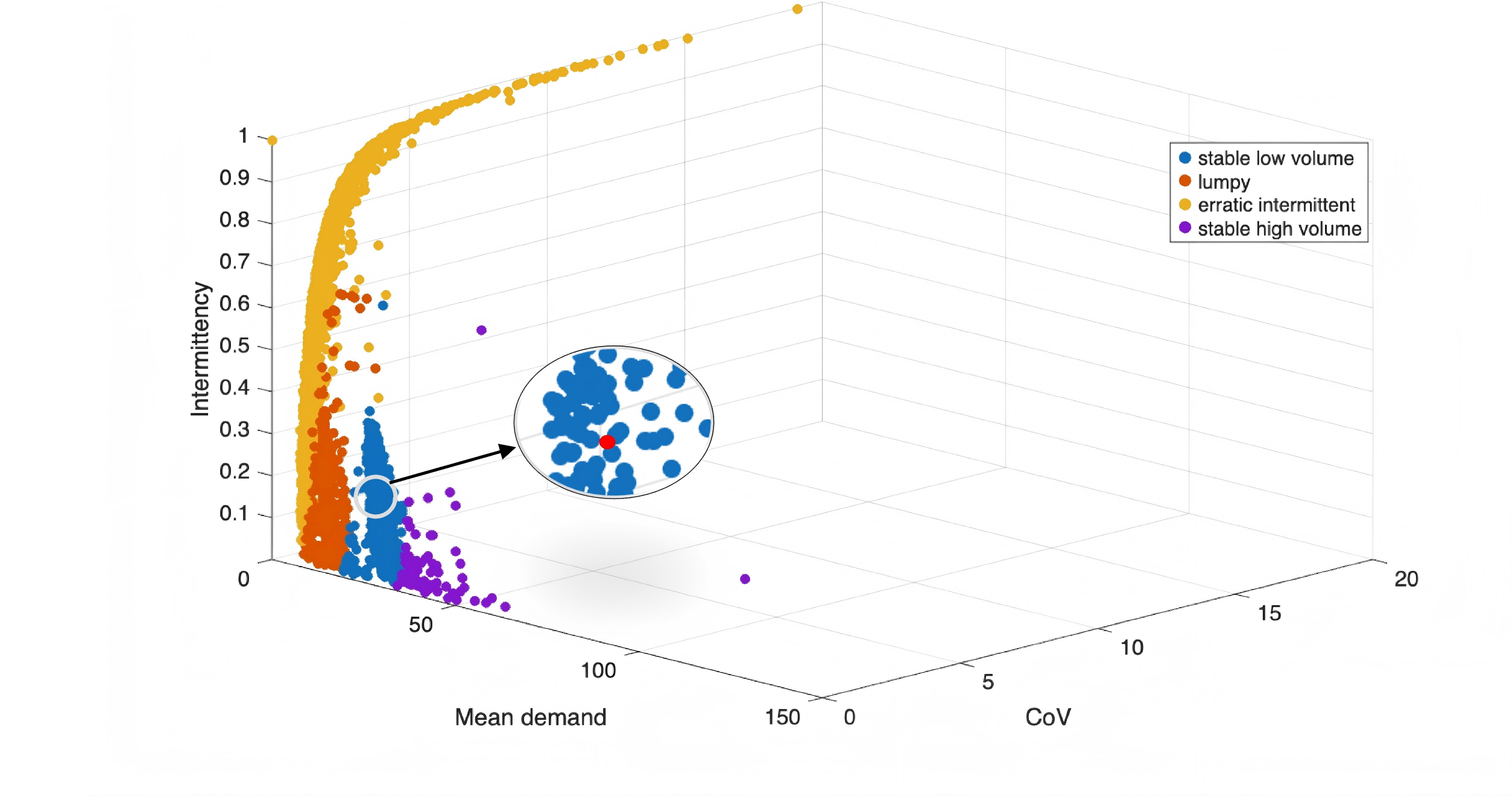}
    \caption{Augmentation data selection within each cluster.}
    \label{fig:cluster}
\end{figure}

\appendixsection{Proof of Proposition 1}
\label{app:proof1}

\begin{proof}[Proof of Proposition~\ref{prop:zero-correction}]
Since $0\in\mathcal{A}$ and $\pi_0(s_t)=0$ belongs to the policy class $\Pi$, the PtC framework can choose the zero-correction policy. Under this policy, the corrected forecast becomes
\[
\hat y_t^{PtC}(\pi_0)
=
\hat y_t^{ML}(1+\pi_0(s_t))
=
\hat y_t^{ML}.
\]
Therefore, the ML-only prediction is a special case of the PtC prediction class. Since the PtC framework optimizes over a feasible policy class that contains the zero-correction policy, taking the infimum over $\Pi$ cannot yield a larger expected loss than the loss attained by $\pi_0$. Hence,
\[
\inf_{\pi\in\Pi}
\mathbb{E}\left[
\frac{1}{T}\sum_{t=1}^{T}
\ell\left(y_t,\hat y_t^{ML}(1+\pi(s_t))\right)
\right]
\le
\mathbb{E}\left[
\frac{1}{T}\sum_{t=1}^{T}
\ell\left(y_t,\hat y_t^{ML}(1+\pi_0(s_t))\right)
\right].
\]
Using $\pi_0(s_t)=0$, the right-hand side reduces to
\[
\mathbb{E}\left[
\frac{1}{T}\sum_{t=1}^{T}
\ell\left(y_t,\hat y_t^{ML}\right)
\right].
\]
\end{proof}

\appendixsection{Proof of Proposition 2}
\label{app:proof2}
\begin{proof}[Proof of Proposition~\ref{prop:top-p-tradeoff}]
Let $\mathcal{L}_{new}(\theta)$ denote the loss on the new demand cycle, and define the feasible parameter space under top-$p$ updating as
\[
\Theta_p=\left\{\theta^0+\Delta:
\operatorname{supp}(\Delta)\subseteq S_p
\right\}.
\]
Then the adaptation gap
\[
G_{ad}(p)
=
\inf_{\theta\in\Theta_p}\mathcal{L}_{new}(\theta)
-
\inf_{\theta\in\Theta_1}\mathcal{L}_{new}(\theta)
\]
is monotone non-increasing with respect to $p$, and satisfies $G_{ad}(1)=0$.

Moreover, suppose that the masked gradient update is performed for $U$ steps:
\[
\theta^{u+1}_p
=
\theta^u_p
-
\eta\, m_p\odot g^u_p,
\quad
u=0,\ldots,U-1,
\]
where $\theta^0_p=\theta^0$, $\eta>0$ is the learning rate, and every gradient coordinate is bounded as $|(g^u_p)_j|\le G$. Then the parameter drift satisfies
\[
\|\theta^U_p-\theta^0\|_2
\le
\eta U G \sqrt{\lceil pd\rceil}.
\]
Therefore, the upper bound on parameter drift is monotone non-decreasing with respect to $p$ and equals zero when $p=0$. If the old-cycle loss
$\mathcal{L}_{old}(\theta)$ is $L_{old}$-Lipschitz continuous, then
\[
\mathcal{L}_{old}(\theta^U_p)-\mathcal{L}_{old}(\theta^0)
\le
L_{old}\eta U G \sqrt{\lceil pd\rceil},
\]
which implies that the forgetting-risk bound is also monotone non-decreasing with respect to $p$.

We first prove the plasticity result. For any $p_1\le p_2$, the nesting property of the Top-$p$ selected sets gives $S_{p_1}\subseteq S_{p_2}$. Therefore, any parameter perturbation supported on $S_{p_1}$ is also supported on $S_{p_2}$, which implies
\[
\Theta_{p_1}\subseteq \Theta_{p_2}.
\]
Since $\Theta_{p_2}$ is a larger feasible parameter space, minimizing the new-cycle loss over $\Theta_{p_2}$ cannot yield a larger optimal value than minimizing over $\Theta_{p_1}$. Hence,
\[
\inf_{\theta\in\Theta_{p_2}}\mathcal{L}_{new}(\theta)
\le
\inf_{\theta\in\Theta_{p_1}}\mathcal{L}_{new}(\theta).
\]
Subtracting the constant full-update benchmark
$\inf_{\theta\in\Theta_1}\mathcal{L}_{new}(\theta)$ from both sides gives
\[
G_{ad}(p_2)\le G_{ad}(p_1).
\]
Thus, the adaptation gap is monotone nonincreasing in $p$. When $p=1$, all parameters are allowed to update, so $\Theta_p=\Theta_1$ and therefore
\[
G_{ad}(1)=0.
\]

We next prove the stability result. By the masked update rule,
\[
\theta^U_p-\theta^0
=
-\eta\sum_{u=0}^{U-1} m_p\odot g^u_p.
\]
Taking the Euclidean norm and applying the triangle inequality yields
\[
\|\theta^U_p-\theta^0\|_2
\le
\eta\sum_{u=0}^{U-1}
\|m_p\odot g^u_p\|_2.
\]
Since the mask $m_p$ contains exactly $\lceil pd\rceil$ nonzero entries and each gradient coordinate is bounded by $G$, we have
\[
\|m_p\odot g^u_p\|_2
\le
G\sqrt{\lceil pd\rceil}.
\]
Therefore,
\[
\|\theta^U_p-\theta^0\|_2
\le
\eta U G\sqrt{\lceil pd\rceil}.
\]
Because $\lceil pd\rceil$ is monotone non-decreasing in $p$, the parameter-drift upper bound is also monotone non-decreasing in $p$. When $p=0$, no parameter is selected for updating, so $m_p=0$ and
\[
\|\theta^U_0-\theta^0\|_2=0.
\]

Finally, if $\mathcal{L}_{old}$ is $L_{old}$-Lipchitz continuous, then
\[
\mathcal{L}_{old}(\theta^U_p)-\mathcal{L}_{old}(\theta^0)
\le
L_{old}\|\theta^U_p-\theta^0\|_2.
\]
Substituting the drift bound gives
\[
\mathcal{L}_{old}(\theta^U_p)-\mathcal{L}_{old}(\theta^0)
\le
L_{old}\eta U G\sqrt{\lceil pd\rceil}.
\]
Thus, increasing $p$ expands the adaptable parameter space and improves plasticity, but it also increases the upper bound on parameter drift and forgetting risk. This establishes the stability-plasticity trade-off of the top-$p$ updating rule.
\end{proof}

\appendixsection{Proof of Proposition 3}\label{app:proof3}
\begin{proof}[Proof of Proposition~\ref{prop:continuous-action-convergence}]
Define the smoothed expected correction objective
\[
\mathcal{J}_{\epsilon}(\theta)
=
\mathbb{E}_{(s_t,y_t,\hat y_t^{ML})\sim\mathcal{D},\,a_t\sim\pi_{\theta}}
\left[
\rho_{\epsilon}\left(\hat y_t^{ML}(1+a_t)-y_t\right)
\right],
\]
where $\rho_{\epsilon}(z)=\sqrt{z^2+\epsilon^2}$. The smoothing parameter $\epsilon>0$ avoids the non-differentiability of the absolute value while
preserving the MAE-type correction objective as $\epsilon\to0$. The notation $a_t\sim\pi_{\theta}$ means that the correction action at time $t$ is generated by the policy parameterized by $\theta$.

At update step $u$, the top-$p$ rule defines a binary mask $m_u\in\{0,1\}^d$. The masked policy update can be written as
\[
\theta^{u+1}
=
\theta^u
-
\eta\,m_u\odot\nabla\mathcal{J}_{\epsilon}(\theta^u).
\]
Let
\[
d_u=m_u\odot\nabla\mathcal{J}_{\epsilon}(\theta^u).
\]
Then $\theta^{u+1}-\theta^u=-\eta d_u$. Since
$\mathcal{J}_{\epsilon}$ is $L$-smooth, the descent lemma gives
\[
\mathcal{J}_{\epsilon}(\theta^{u+1})
\le
\mathcal{J}_{\epsilon}(\theta^u)
+
\nabla\mathcal{J}_{\epsilon}(\theta^u)^{\top}
(\theta^{u+1}-\theta^u)
+
\frac{L}{2}\|\theta^{u+1}-\theta^u\|_2^2.
\]
Substituting $\theta^{u+1}-\theta^u=-\eta d_u$ yields
\[
\mathcal{J}_{\epsilon}(\theta^{u+1})
\le
\mathcal{J}_{\epsilon}(\theta^u)
-
\eta\nabla\mathcal{J}_{\epsilon}(\theta^u)^{\top}d_u
+
\frac{L\eta^2}{2}\|d_u\|_2^2.
\]
Because $m_u$ is a binary mask,
\[
\nabla\mathcal{J}_{\epsilon}(\theta^u)^{\top}d_u
=
\nabla\mathcal{J}_{\epsilon}(\theta^u)^{\top}
\left(m_u\odot\nabla\mathcal{J}_{\epsilon}(\theta^u)\right)
=
\|d_u\|_2^2.
\]
Therefore
\[
\mathcal{J}_{\epsilon}(\theta^{u+1})
\le
\mathcal{J}_{\epsilon}(\theta^u)
-
\eta\left(1-\frac{L\eta}{2}\right)\|d_u\|_2^2.
\]
If $0<\eta\le 1/L$, then $1-L\eta/2\ge 1/2$, and thus
\[
\mathcal{J}_{\epsilon}(\theta^{u+1})
\le
\mathcal{J}_{\epsilon}(\theta^u)-\frac{\eta}{2}\|d_u\|_2^2
\le
\mathcal{J}_{\epsilon}(\theta^u).
\]
This proves the monotonic decrease of the expected correction loss after each
policy update.
\end{proof}

\appendixsection{Top-$p$ Masked Bandit Update}
\label{app:algorithm}

\begin{algorithm}[htbp]
\caption{Top-$p$ masked bandit update for the CB correction policy}
\label{alg:update}
\begin{algorithmic}[1]

\STATE \textbf{Input:} Feedback data $\mathcal{D}_{new}$, initial policy
$\pi_{\theta^0}$, update ratio $p$, learning rate $\eta$,
mini-batch size $B$, and update steps $U$

\STATE \textbf{Output:} Updated correction policy $\pi_{\theta^U}$

\FOR{$u=0,\ldots,U-1$}

    \STATE Sample a mini-batch
    $\mathcal{B}_u \subset \mathcal{D}_{new}$,
    where $|\mathcal{B}_u|=B$

    \STATE For all $t\in\mathcal{B}_u$, compute
    $\mu_{\theta^u}(s_t)$ and $\sigma_{\theta^u}(s_t)$,
    sample
    \[
    \tilde{a}_t
    \sim
    \mathcal{N}
    \left(
    \mu_{\theta^u}(s_t),
    \sigma_{\theta^u}^{2}(s_t)
    \right),
    \]
    and set
    \[
    a_t=\operatorname{clip}(\tilde{a}_t,-1,2)
    \]

    \STATE Compute the corrected forecasts and rewards:
    \[
    r_t
    =
    \tau
    \left(
    |\hat{y}^{ML}_t-y_t|
    -
    |\hat{y}^{CB}_t-y_t|
    \right)
    \]

    \STATE Estimate the bandit policy gradient:
    \[
    \widehat{\nabla}_{\theta}J(\theta^u)
    =
    \frac{1}{|\mathcal{B}_u|}
    \sum_{t\in\mathcal{B}_u}
    (r_t-b_t)
    \nabla_{\theta}
    \log
    \pi_{\theta^u}(a_t\mid s_t)
    \]

    \STATE For each layer $k$, construct mask $M$ by selecting the smallest
    \[
    q_k=\left\lceil p|\theta^k|\right\rceil
    \]
    parameters according to $|\theta_j^k|$

    \STATE Update:
    \[
    \theta^{u+1}
    =
    \theta^u
    +
    \eta
    M
    \odot
    \widehat{\nabla}_{\theta}J(\theta^u)
    \]

\ENDFOR

\STATE \textbf{Return} $\pi_{\theta^U}$

\end{algorithmic}
\end{algorithm}

\end{document}